\newtheorem{definition}{Definition}
\newtheorem{theorem}[definition]{Theorem}
\newtheorem{corollary}[definition]{Corollary}
\newtheorem{lemma}[definition]{Lemma}
\newtheorem{proposition}[definition]{Proposition}
\newtheorem{remark}[definition]{Remark}
\newtheorem{example}[definition]{Example}
\newcommand{\lnon}{\overline}
\newcommand{\lxor}{\oplus}
\newcommand{\avec}{{\bf a}}
\newcommand{\evec}{{\bf e}}
\newcommand{\xvec}{{\bf x}}
\newcommand{\yvec}{{\bf y}}
\newcommand{\zvec}{{\bf z}}
\newcommand{\fvec}{{\bf f}}
\newcommand{\gvec}{{\bf g}}
\newcommand{\wvec}{{\bf w}}
\begin{document}

\title{On the Compressive Power of Boolean Threshold Autoencoders}
\author[1]{Avraham A. Melkman}
\author[2]{Sini Guo}
\author[2]{Wai-Ki Ching\thanks{
partially supported by Hong Kong RGC GRF Grant no. 17301519, IMR and RAE Research fund from Faculty of Science, HKU.}}
\author[3]{Pengyu Liu}
\author[3]{Tatsuya Akutsu\thanks{partially supported by Grant-in-Aid \#18H04113 from JSPS, Japan,}\thanks{Correspoding author. e-mail: takutsu@kuicr.kyoto-u.ac.jp}}
\affil[1]{Department of Computer Science,
Ben-Gurion University of the Negev}
\affil[2]{Department of Mathematics,
The University of Hong Kong}
\affil[3]{Bioinformatics Center, Institute for Chemical Research, Kyoto University}

\maketitle

\begin{abstract}%
An autoencoder is a layered neural network
whose structure can be viewed as consisting of an \emph{encoder}, which compresses an input vector of dimension $D$ to
a vector of low dimension $d$, and a \emph{decoder} which transforms
the low-dimensional vector back to the original input vector (or one that is very similar). In this paper we explore the compressive power of autoencoders
that are Boolean threshold networks by
studying
the numbers of nodes and layers that are required to ensure that
%\textcolor{red}{the numbers of nodes and layers} that are required to ensure that
 each vector in a given set of distinct input binary vectors is transformed back to its original. We show that for any set of $n$ distinct vectors
there exists a seven-layer autoencoder with the smallest possible middle layer,
(i.e., its size is logarithmic in $n$),
but that there is a set of $n$ vectors for which there is no three-layer
autoencoder with a middle layer of the same size.
In addition we present a kind of trade-off: if a considerably larger middle layer is permissible then a five-layer autoencoder does exist.
We also study encoding by itself.
The results we obtain suggest that it is the decoding
that constitutes the bottleneck of autoencoding.
For example, there always is a three-layer Boolean threshold encoder that compresses
$n$ vectors into
a dimension that is reduced to twice the logarithm of $n$.

{\bf Keywords:}
Neural networks, Boolean functions, threshold functions, autoencoders.
\end{abstract}

\section{Introduction}
Artificial neural networks have been extensively studied in recent years.
Among various models, \emph{autoencoders} attract much attention because
of
their power to generate new objects such as image data.
An autoencoder is a layered neural network
whose structure can be viewed as consisting of two parts, an \emph{encoder} and
a \emph{decoder}, where the former transforms an input vector to
a low-dimensional vector and the latter transforms the low-dimensional
vector to an output vector which should be
the same as or similar to the input vector, \cite{ackley85,baldi89,hinton06,baldi12}.
An autoencoder is trained in an unsupervised manner to minimize
the difference between input and output data by adjusting weights
(and some other parameters).
In the process it learns, therefore, a mapping from high-dimensional
input data to a low-dimensional representation space.
Although autoencoders have a long history \cite{ackley85,hinton06},
recent studies focus on \emph{variational autoencoders}
\cite{kingma13,doersch16,tschannen18}
because of their generative power.
Autoencoders have been applied to various areas including
image processing \cite{doersch16,tschannen18},
natural language processing \cite{tschannen18},
and drug discovery \cite{gomez18}.

As described above, autoencoders perform dimensionality reduction,
a kind of data compression.
However, how data are compressed via autoencoders is not yet very clear.
Of course, extensive studies have been done on the representation
power of deep neural networks \cite{delalleau11,montufar14,an17,zhang17}.
Yet, to the best of
the authors' knowledge, the quantitative relationship
between the compressive power and the numbers of layers and nodes in
autoencoders is still unclear.

In this paper, we study the compressive power of autoencoders
using a Boolean model of layered neural networks.
In this model, each node takes on values that are
either 1 (active) or 0 (inactive)
and the activation rule for each node is given by a Boolean function.
That is, we consider a \emph{Boolean network} (BN) \cite{kauffman69}
as a model of a neural network.
BNs have been used as a discrete model of genetic networks,
for which extensive studies have been done on inference, control,
and analysis \cite{akutsu18,cheng11,li16,liu16,lu17,zhao16}.
It should be noted that BNs have also been used as a discrete model
of neural networks in which functions are restricted to be
linear Boolean threshold functions whose outputs are determined by
comparison of the weighted sum of input values with a threshold
\cite{anthony01}.
Such a BN is referred to as \emph{Boolean threshold networks} (BTNs)
in this paper.
Although extensive theoretical studies have been
devoted to the representational power of BTNs \cite{anthony01},
almost none considered BN models of autoencoders with the notable exception of
Baldi's study of the computational complexity of clustering via
autoencoders \cite{baldi12}.

\begin{table*}
\begin{small}
\begin{center}
\caption{Summary of Results.}
\label{tbl:summary}
\begin{tabular}{|l|llll|}
\hline
& $d$ & architecture & type & constraint \\
\hline
Proposition 4 & $\log n$ & $D/d/D$ & Encoder/Decoder & BN
%(almost trivial)
\\
\hline
Theorem 5 & $\lceil 8 \sqrt{2M} \ln n \rceil$ & $D/d$ & Encoder & BTN, $M = \#1\mbox{s in each }\xvec^i$\\
Theorem 8 & $2 \lceil \log n \rceil$ & $D/d$ & Encoder & BN with parity function \\
Theorem 9 & $2 \lceil \log n \rceil$ & $D/D^2/d$ & Encoder & BTN \\
Theorem 15 & $ \lceil \log n \rceil$ & 4 layers ($O(\sqrt{n}+D)$ nodes) & Encoder & BTN \\
\hline
Theorem 19 & $2 \lceil \sqrt{n} \rceil$ & $D/({\frac d 2}+D)/d/{\frac {dD} {2}}/D$
& Encoder/Decoder & BTN \\
Theorem 21 & $\lceil \log n \rceil$ &  $D/n/d/n/D$ & Encoder/Decoder & BTN \\
Theorem 22 & $2\lceil \log \sqrt{n} \rceil$ & 7 layers ($O(D \sqrt{n})$ nodes) & Encoder/Decoder & BTN \\
\hline
\end{tabular}
\end{center}
\end{small}
\end{table*}

The BTNs we consider in this paper are always layered ones,
and we study their compressive power in two settings.
The first one focuses on encoding.
We are given a set of $n$ $D$-dimensional different binary vectors
$X_n=\{\xvec^0,\ldots,\xvec^{n-1}\}$ and the task is to find a BTN
which maps each $\xvec^i$ to some $d$-dimensional binary vector
$\fvec(\xvec^i)$ in such a way that
$\fvec(\xvec^i) \neq \fvec(\xvec^j)$ for all $i \neq j$.
Such a BTN is called a \emph{perfect encoder} for $X_n$.
The second one involves both encoding and decoding.
In this case
we are again given a set of $D$-dimensional binary vectors
$X_n=\{\xvec^0,\ldots,\xvec^{n-1}\}$,
and the task is to find a BTN consisting of an encoder function $\fvec$
and a decoder function $\gvec$ satisfying $\gvec(\fvec(\xvec^i))= \xvec^i$
for all $i=0,\ldots,n-1$.
Such a BTN is called a \emph{perfect autoencoder} for $X_n$.
It is clear from the definition that if $(\fvec,\gvec)$
is a perfect autoencoder,
then $\fvec$ is a perfect encoder.
In this paper, we are interested in whether or not there exists a BTN
for any $X_n$ under the condition that
the architecture (i.e., the number of layers and
the number of nodes in each layer) is fixed.
This setting is reasonable because learning of an autoencoder
is usually performed, for a given set of input vectors,
by fixing the architecture of the BTN and adjusting
the weights of the edges.
Therefore, the existence of a perfect BTN for any $X_n$ implies that
a BTN can be trained for any $X_n$ so that the required conditions are
satisfied if an appropriate set of initial parameters is given.

The results on existence of perfect
encoders and autoencoders
are summarized in Table~\ref{tbl:summary},
in which an entry in the architecture column lists the number of nodes in each layer
(from input to output).
Note that $\log n$ means $\log_2 n$ throughout the paper.
In addition to these positive results,
we show a negative result
(Theorem 16)
stating that for some $X_n$ with $d = \lceil \log n \rceil$,
there does not exist a perfect autoencoder consisting of three layers.

\section{Problem Definitions}

Our first BN model is a three-layered one, see Fig.~\ref{fig:arch3};
its definitions can be extended to networks with four or more layers
in a straightforward way.
Let $\xvec$, $\zvec$, $\yvec$ be binary vectors given to the first,
second, and third layers, respectively.
In this case, the first, second, and third layers correspond to
the input, middle, and output layers, respectively.
Let $\fvec$ and $\gvec$ be lists of Boolean functions assigned to
nodes in the second and third layers, respectively,
which means $\zvec = \fvec(\xvec)$ and $\yvec = \gvec(\zvec)$.
Since our primary interest is in autoencoders,
we assume that $\xvec$ and $\yvec$ are $D$-dimensional binary vectors
and $\zvec$ is a $d$-dimensional binary vector with $d \leq D$.
A list of functions is also referred to as a \emph{mapping}.
The $i$th element of a vector $\xvec$ will be denoted
by $x_i$, which is also used to denote the node corresponding to this element.
Similarly, for each mapping $\fvec$,
$f_i$ denotes the $i$th function.

In the following,
$X_n = \{\xvec^0,\ldots,\xvec^{n-1}\}$
will always denote a set of $n$ $D$-dimensional binary input vectors
that are all different.

\begin{definition}
A mapping $\fvec$: $\{0,1\}^D \rightarrow \{0,1\}^d$
is called a \emph{perfect encoder} for $X_n$
if $\fvec(\xvec^i) \neq$ $\fvec(\xvec^j)$ holds for all $i \neq j$.
\end{definition}

\begin{definition}
A pair of mappings $(\fvec,\gvec)$ with
$\fvec$: $\{0,1\}^D \rightarrow \{0,1\}^d$
and
$\gvec$: $\{0,1\}^d \rightarrow \{0,1\}^D$
is called a \emph{perfect autoencoder} if
$\gvec(\fvec(\xvec^i))=\xvec^i$ holds for all $\xvec^i \in X_n$.
\end{definition}

\begin{figure}[th]
\begin{center}
\includegraphics[width=8cm]{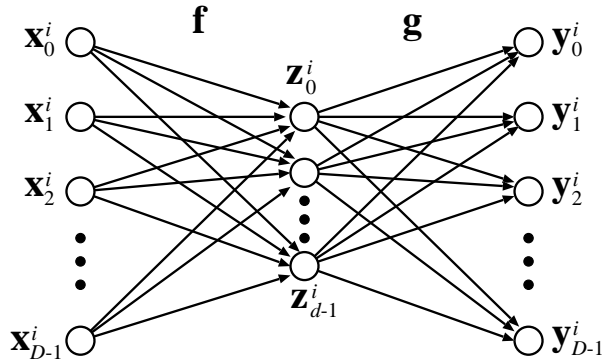}
\caption{Architecture of an autoencoder.}
\label{fig:arch3}
\end{center}
\end{figure}

Fig.~\ref{fig:arch3} illustrates the architecture of an autoencoder.
As mentioned before,
if $(\fvec,\gvec)$ is a perfect autoencoder,
then $\fvec$ is a perfect encoder.

\begin{example}
Let $X_4 = \{\xvec^0,\xvec^1,\xvec^2,\xvec^3\}$ where
$\xvec^0=(0,0,0)$,
$\xvec^1=(1,0,0)$,
$\xvec^2=(1,0,1)$,
$\xvec^3=(1,1,1)$.
Let $D=3$ and $d=2$.
Define $\zvec=\fvec(\xvec)$ and $\yvec=\gvec(\zvec)$ by
	\begin{eqnarray*}
		f_0:x_0 \lxor x_1,~
		f_1:x_2,~
		g_0:z_0 \lor z_1,~
		g_1:\lnon{z_0} \land z_1,~
		g_2:z_1.
	\end{eqnarray*}
This pair of mappings has the following truth table, which shows it to be
a perfect Boolean autoencoder.

\smallskip

\begin{center}
	\begin{tabular}{|lll|ll|lll|}
			\hline
			$x_0$ & $x_1$ & $x_2$ & $z_0$ & $z_1$ & $y_0$ & $y_1$ & $y_2$\\
			\hline
			0 & 0 & 0 & 0 & 0 & 0 & 0 & 0\\
			1 & 0 & 0 & 1 & 0 & 1 & 0 & 0\\
			1 & 0 & 1 & 1 & 1 & 1 & 0 & 1\\
			1 & 1 & 1 & 0 & 1 & 1 & 1 & 1\\
			\hline
	\end{tabular}
\end{center}
\end{example}

\medskip

\begin{proposition}\label{p:map}
For any $X_n$,
there exists a perfect Boolean autoencoder
with $d = \lceil \log n \rceil$.
\label{prop:bool}
\end{proposition}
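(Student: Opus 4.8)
The plan is to exploit the fact that in a Boolean network (as opposed to a Boolean \emph{threshold} network) each node may compute an arbitrary Boolean function of its inputs, so we are free to hard-wire a lookup table. First I would use the ordering already fixed on the input set, $X_n = \{\xvec^0,\ldots,\xvec^{n-1}\}$, and let $b \colon \{0,\ldots,n-1\} \to \{0,1\}^d$ with $d = \lceil \log n \rceil$ be the standard binary encoding of the index. Since $2^d \ge n$, the map $b$ is injective.

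Next I would define the encoder $\fvec \colon \{0,1\}^D \to \{0,1\}^d$ by setting $\fvec(\xvec^i) = b(i)$ for each $i$ and $\fvec(\xvec) = b(0)$ (say) for every $\xvec \notin X_n$. Coordinate-wise, $f_j$ is the Boolean function that returns the $j$th bit of $b(i)$ on input $\xvec^i$ and $0$ off $X_n$; this is a perfectly legitimate Boolean function of $D$ variables. Because $b$ is injective and the vectors $\xvec^0,\ldots,\xvec^{n-1}$ are distinct, $\fvec(\xvec^i) \neq \fvec(\xvec^j)$ whenever $i \neq j$, so $\fvec$ is a perfect encoder for $X_n$.

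Then I would define the decoder $\gvec \colon \{0,1\}^d \to \{0,1\}^D$ by $\gvec(b(i)) = \xvec^i$ for $i = 0,\ldots,n-1$, with $\gvec(\zvec)$ set arbitrarily (e.g. to the all-zero vector) on the remaining $2^d - n$ arguments. Again each coordinate $g_k$ is a bona fide Boolean function of $d$ variables, hence an admissible node function. Composing, $\gvec(\fvec(\xvec^i)) = \gvec(b(i)) = \xvec^i$ for every $i$, so $(\fvec,\gvec)$ is a perfect autoencoder realised by a three-layer network $D/d/D$ with middle layer of size $d = \lceil \log n \rceil$.

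There is essentially no obstacle in the argument itself; the only thing to verify is that the functions written down are legal node functions, which holds precisely because the BN model allows arbitrary Boolean functions. The role of the proposition is to record a baseline: $\lceil \log n \rceil$ is also a trivial lower bound on $d$, since the middle layer must assume at least $n$ distinct values for reconstruction to be possible, so the bound is tight. The real difficulty, taken up in the subsequent theorems, is to match (or nearly match) this value when every node is constrained to compute a linear threshold function rather than an arbitrary Boolean one.
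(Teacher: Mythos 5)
Your proposal is correct and follows exactly the same route as the paper: encode each $\xvec^i$ to the $d$-bit binary representation of its index and decode by a lookup table, both of which are admissible since arbitrary Boolean node functions are allowed. The only difference is that you spell out the (irrelevant) behaviour off $X_n$ and the tightness remark, which the paper states separately after the proposition.
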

\begin{proof}
The encoder maps a vector to its index, in binary representation, and the decoder maps the index back to the vector.
To implement the idea formally denote by $i2b_d(j)$ the $d$-dimensional binary vector representing $j$,
for $j < 2^d$ and $ 0<d$.
For example, $i2b_3(5)=(1,0,1)$.

Given $d = \lceil \log n \rceil$ define $(\fvec,\gvec)$
by $\fvec(\xvec^i)=i2b_d(i)$ and $g_j(\fvec(\xvec^i)) = x^i_j$.
Clearly, $(\fvec,\gvec)$ is a perfect autoencoder and
can be represented by Boolean functions.
\end{proof}

Note that
there does not exist a perfect Boolean autoencoder
with $d < \lceil \log n \rceil$ because $X_n$ contains $n$ different vectors.
Note, furthermore, that the Proposition permits the use of arbitrary
Boolean functions.
In the following, we focus on what is achievable when a neural network model
with Boolean threshold functions \cite{anthony01} is used.
A function $f$: $\{0,1\}^h \rightarrow \{0,1\}$ is called
a \emph{Boolean threshold function} if it is represented as
\begin{eqnarray*}
f(\xvec) = \left\{
\begin{array}{ll}
1, & \avec \cdot \xvec \geq \theta,\\
0, & \mbox{otherwise,}
\end{array}
\right.
\end{eqnarray*}
for some $(\avec,\theta)$,
where $\avec$ is an $h$-dimensional integer vector and
$\theta$ is an integer.
We will also denote the same function as
$[\avec \cdot \xvec \geq \theta]$.
If all activation functions in a neural network are Boolean threshold
functions, the network is called a \emph{Boolean threshold network} (BTN).
In the following sections, we will consider BTNs with $L$ layers where
$L \geq 2$
(see Fig.~\ref{fig:multilayer}).
Such a BTN is represented as
$\yvec = \fvec^{(L-1)}(\fvec^{(L-2)}( \cdots \fvec^{(1)}(\xvec) \cdots ))$,
where
$\fvec^{(i)}$ is a list of activation functions for the $i+1$-th layer.
When a BTN is used as an autoencoder,
some layer is specified as the middle layer.
If the $k$th layer is specified as the middle layer,
the \emph{middle vector} $\zvec$, \emph{encoder} $\fvec$,
and \emph{decoder} $\gvec$ are defined by
\begin{eqnarray*}
\zvec & = & \fvec^{(k-1)}(\fvec^{(k-2)}(\cdots \fvec^{(1)}(\xvec) \cdots )) = \fvec(\xvec),\\
\yvec & = & \fvec^{(L-1)}(\fvec^{(L-2)}(\cdots \fvec^{(k)}(\zvec) \cdots )) = \gvec(\zvec).
\end{eqnarray*}

\begin{figure}[th]
\begin{center}
\includegraphics[width=8cm]{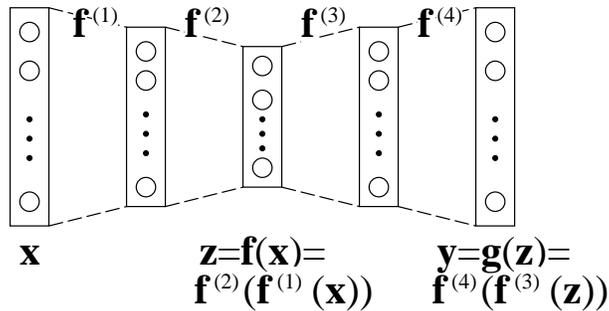}
\caption{Architecture of five layer BTN.
In this case, the third layer corresponds to the middle layer.}
\label{fig:multilayer}
\end{center}
\end{figure}

\section{How Easy is it to Encode?}

This section is devoted to the encoding setting.
First we establish, non-constructively, the existence of two encoders for
$n$ $D$-dimensional vectors:
a two-layer BTN that uses an output layer
with $O(\sqrt{D} \log n)$ nodes, and a three-layer BTN
with a hidden layer of size $D^2$ and an
output layer of size $2\lceil \log_2{n}\rceil$.
The third result reduces the number of output nodes further to
$\lceil\log n \rceil$, using a BTN of depth 4 with $O(\sqrt{n} + D)$
hidden nodes; it also lays out the architecture of this BTN.
Taken together these results suggest that encoding is relatively easy
by itself.
\smallskip
\begin{theorem}\label{t:perfecte}
	Given $X_n$ there exists a perfect encoder
	that is a two-layer BTN  with at most $\lceil 8\sqrt{2M} \ln n \rceil$ output nodes,
	where $M$ is the maximum number of 1's that any two
	vectors in $X_n$ have in common.
\end{theorem}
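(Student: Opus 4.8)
The plan is to invoke the probabilistic method. I would set $d=\lceil 8\sqrt{2M}\,\ln n\rceil$, draw Boolean threshold functions $f_1,\dots,f_d$ independently at random from a distribution $\mathcal D$ to be chosen, and argue that with positive probability the mapping $\fvec=(f_1,\dots,f_d)$ is a perfect encoder for $X_n$. Since $\fvec(\xvec^i)\neq\fvec(\xvec^j)$ as soon as some coordinate $f_t$ takes different values on $\xvec^i$ and $\xvec^j$, it is enough that $\mathcal D$ satisfy, for every fixed pair $i\neq j$,
\[
  p_{ij}:=\Pr_{f\sim\mathcal D}\bigl[f(\xvec^i)\neq f(\xvec^j)\bigr]\ \geq\ p,
\]
where $p=\Theta(1/\sqrt M)$. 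Then for a fixed pair the probability that none of the $d$ independent copies separates it is at most $(1-p)^d\le e^{-pd}$, so a union bound over the $\binom n2$ pairs shows the construction fails with probability at most $\binom n2 e^{-pd}<\tfrac{n^2}{2}e^{-pd}$, which is $<1$ once $pd\ge 2\ln n$.

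For $\mathcal D$ I would use a random hyperplane through the origin: take the weight vector $\avec$ uniformly distributed on the unit sphere (equivalently, with i.i.d.\ standard Gaussian entries) and threshold $0$, giving $f(\xvec)=[\avec\cdot\xvec\ge 0]$. Almost surely $\avec\cdot\xvec^i\neq 0$ for all $i$, so $f$ depends only on the signs of the $\avec\cdot\xvec^i$, and the standard random-projection identity gives $p_{ij}=\theta_{ij}/\pi$, where $\theta_{ij}\in[0,\pi/2]$ is the angle between $\xvec^i$ and $\xvec^j$ (if one of them is the all-zero vector, $p_{ij}\ge\tfrac12$ and we are done, so assume not). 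The role of $M$ is to bound this angle away from $0$: with $P_i=\{k:x^i_k=1\}$ we have $\cos\theta_{ij}=|P_i\cap P_j|/\sqrt{|P_i|\,|P_j|}$, and because $\xvec^i\neq\xvec^j$ at least one of $|P_i|,|P_j|$ strictly exceeds $|P_i\cap P_j|$, so $|P_i|\,|P_j|\ge |P_i\cap P_j|\,(|P_i\cap P_j|+1)$ and hence
\[
  \sin^2\theta_{ij}\ \ge\ 1-\frac{|P_i\cap P_j|}{|P_i\cap P_j|+1}\ =\ \frac{1}{|P_i\cap P_j|+1}\ \ge\ \frac{1}{M+1}.
\]
Since $\theta\ge\sin\theta$ on $[0,\pi/2]$, this gives $p_{ij}\ge \tfrac{1}{\pi\sqrt{M+1}}=:p$, which is the $\Theta(1/\sqrt M)$ I wanted.

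It then remains to check the numerics and to make the weights integral. With $p=\tfrac{1}{\pi\sqrt{M+1}}$ the condition $pd\ge 2\ln n$ becomes $d\ge 2\pi\sqrt{M+1}\,\ln n$, and a one-line computation ($\pi^2(M+1)\le 32M$ for $M\ge 1$) shows $2\pi\sqrt{M+1}\le 8\sqrt{2M}$, so $d=\lceil 8\sqrt{2M}\,\ln n\rceil$ suffices. For integrality, note that only finitely many sign patterns of $(\avec\cdot\xvec^0,\dots,\avec\cdot\xvec^{n-1})$ are possible, and any pattern produced by a real halfspace through the origin is also produced by an integer-weight threshold function $[\bvec\cdot\xvec\ge 1]$ — take $\bvec$ a large enough integer multiple of $\avec$, rounded coordinatewise. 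Replacing each sampled $\avec$ by such a $\bvec$ preserves every $p_{ij}$, so the final object is a genuine two-layer BTN with $d$ output nodes.

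The conceptual heart of the argument is the choice of $\mathcal D$ in the second step: this is where the hypothesis on $M$ is actually consumed, via the elementary but essentially tight angle bound above (tight already for $P_i\subsetneq P_j$ with $|P_i|=M$). I expect everything else — the union bound, the constant check, the rounding to integer weights — to be routine. An alternative that sidesteps the geometry entirely, at the price of a worse constant, is to take $\avec\in\{-1,+1\}^D$ uniformly together with an integer threshold drawn uniformly from a window of width $\Theta(\sqrt M)$ about $0$, and to lower-bound $p_{ij}$ by a Rademacher anticoncentration estimate for $\sum_{k\in P_i\cap P_j}a_k$; this too yields $p_{ij}=\Omega(1/\sqrt M)$ and is plausibly the route behind the stated factor $8\sqrt 2$.
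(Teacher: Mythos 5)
Your proposal is correct, but it reaches the bound by a genuinely different route from the paper. The paper uses the same probabilistic-method skeleton (independent random coordinates, $(1-p)^d$ per pair, union bound over $\binom{n}{2}$ pairs), but it draws each weight vector $\wvec$ uniformly from $\{-1,+1\}^D$ with a fixed threshold, and lower-bounds the per-coordinate separation probability for a pair $(\xvec^i,\xvec^j)$ by splitting the weighted sum into the part over the common support and the parts over the two halves of the symmetric difference: it conditions on the Rademacher sum over the $m_{\{i,j\}}$ common coordinates being exactly $0$ (anticoncentration via a Stirling estimate, probability at least $1/\sqrt{2m_{\{i,j\}}}$) and on the two remaining sums having opposite signs (probability at least $1/4$), obtaining $p\geq \frac{1}{4\sqrt{2M}}$ --- which is exactly where the constant $8\sqrt{2}$ comes from. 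You instead use Gaussian weights with threshold $0$, the random-hyperplane identity $p_{ij}=\theta_{ij}/\pi$, and an angle bound $\sin^2\theta_{ij}\geq 1/(M+1)$ driven by $M$; your $p_{ij}\geq\frac{1}{\pi\sqrt{M+1}}$ is in fact a slightly better constant, so the stated $d$ suffices with room to spare. What the paper's choice buys is that the weights are already integers ($\pm 1$), so no rounding step is needed; what yours buys is a cleaner geometric argument and a sharper bound. Your closing guess that a Rademacher anticoncentration estimate underlies the factor $8\sqrt{2}$ is essentially the paper's argument, except that the paper uses a fixed threshold rather than a random one.

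One small repair is needed in your integrality step. Converting $[\avec\cdot\xvec\geq 0]$ to $[\bvec\cdot\xvec\geq 1]$ changes the value on the all-zero input from $1$ to $0$; if $\mathbf{0}\in X_n$, then for a pair $(\mathbf{0},\xvec^i)$ this flips which coordinates separate the pair, so the separation guaranteed by the union bound for the real-weight network is not automatically inherited by the integer-weight one. Keeping the threshold at $0$ in the integer version fixes this: a sufficiently fine rational approximation of $\avec$, scaled to clear denominators, preserves the sign of every nonzero $\avec\cdot\xvec^i$ (and these are almost surely all nonzero), while $\bvec\cdot\mathbf{0}=0\geq 0$ still evaluates to $1$. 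Alternatively one can treat the zero vector separately. Either fix is immediate, so this is a slip rather than a structural problem.
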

\begin{proof}
We use the probabilistic method to establish the existence of
a set $W$ of $d$ $D$-dimensional weight vectors $\wvec^{j}\in \{-1,1\}^{D}$
with the property that the two-layer BTN whose threshold functions
are $f_j(\xvec)=[\xvec \cdot \wvec^{j}\geq 1] ,
\ j=0,\ldots, d-1$,
yields output vectors that are all different. Denote by $\yvec^i$
the output vector
for $\xvec^i$, with $y^i_j=f_j(\xvec^i)$.

Consider a random $W$ such that each $ w^{j}_i $ has the value $-1$ or 1
with probability $\frac{1}{2}$.
Let us compute a lower bound on the probability that a given pair of
output vectors differs in a specific coordinate.
Without loss of generality (w.l.o.g.) we take the vectors to be
$\yvec^{0}$ and $\yvec^{1}$, and look at the $\ell$-th coordinate.
Because this probability does not depend on the coordinate,
since $W$ was chosen at random, let's denote it
by $p_{\{0,1\}}$.
Denote by $m_i$ the number of 1's in  $\xvec^{i}$,
and by $m_{\{0,1\}}$ the number of coordinates with value 1 for both
$\xvec^{0}$ and  $\xvec^{1}$.
W.l.o.g.
$x^0_0=x^1_0 = \cdots = x^0_{m_{\{0,1\}}-1}=x^1_{m_{\{0,1\}}-1}=1$,
$x^0_{m_{\{0,1\}}}=\cdots=x^0_{m_0-1}=x^1_{m_0}=\cdots=
x^1_{m_0+m_1-m_{\{0,1\}}-1}=1$,
all other coordinates being 0.

Consequently ${y}^0_{\ell}=[\sum_{j=0}^{m_0-1}w^{\ell}_j\geq 0]$, and
${y}^1_{\ell}=[\sum_{j=0}^{m_{\{0.1\}}-1}w^{\ell}_j+
\sum_{j=m_0}^{m_0+m_1-m_{\{0.1\}}-1}w^{\ell}_j\geq 0]$.

Suppose $m_{\{0,1\}}$ is even. Then a lower bound on $p_{\{0,1\}}$ is
$p_{\{0,1\}} \geq Prob(\sum_{j=0}^{m_{\{0,1\}}-1}w^{\ell}_j=0 )\cdot Q$, where
$Q$ stands for
\begin{align}
  &Prob(\sum_{j=m_{\{0,1\}}}^{m_0-1}w^{\ell}_j\geq 0)   \cdot
Prob(\sum_{j=m_0}^{m_0+m_1-m_{\{0,1\}}-1}w^{\ell}_j<0)+\nonumber \\
&Prob(\sum_{j=m_{\{0,1\}}}^{m_0-1}w^{\ell}_j< 0)   \cdot
Prob(\sum_{j=m_0}^{m_0+m_1-m_{\{0,1\}}-1}w^{\ell}_j\geq 0)
 .\label{eq:prob2}
\end{align}

By the soon-to-be-stated Lemma \ref{l:val},
$Prob(\sum_{j=0}^{m_{\{0,1\}}-1}w^{\ell}_j= 0) \geq \frac{1}{\sqrt{2m_{\{0,1\}}}}$.
Turning to the estimation of $Q$, Lemma \ref{l:val} implies that if one of
$m_0-m_{\{0,1\}}, m_1-m_{\{0,1\}}$ is odd then $Q=\frac{1}{2}$.
In the remaining case that both $r_0=m_0-m_{\{0,1\}}$ and $ r_1=m_1-m_{\{0,1\}}$ are even
$$Q=\frac{1}{2}-\binom{r_1}{\frac{r_0}{2}}\binom{r_1}{\frac{r_1}{2}}\frac{1}
{2^{r_0+r_1+1}}.$$
This expression assumes its smallest value, for $r_0+r_1\geq 1$,
when one of $r_0,r_1$ is 0 and the other is 2,
in which case its value is $\frac{1}{4}$.
We conclude, therefore,
that if the number of 1's that $\xvec^0,\xvec^1$ have in common is even
then $Q\geq \frac{1}{4}$ and
$p_{\{0,1\}}\geq\frac{1}{ 4\sqrt{2m_{\{0,1\}}}}
\geq \frac{1}{4\sqrt{2D}}$.

A similar computation for the case that $m_{\{0,1\}}$ is odd shows that
if number of 1's that $\xvec^0,\xvec^1$ have in common is odd then
$p_{\{0,1\}}\geq \frac{1}{4\sqrt{m_{\{0,1\}}}}$.

In summary,  the probability that a specific pair of output vectors differ
at a specific coordinate is at least
$$p=\min_{0\leq i <j \leq n-1}p_{\{i,j\}}\geq \frac{1}{4\sqrt{2M}},$$
where $M=\max_{0\leq i <j \leq n-1}m_{\{i,j\}}$,

Consequently the probability that all coordinates of two
$d$-dimensional output vectors
are identical is at most $(1-\frac{1}{4\sqrt{2M}})^d$.
To ensure that the probability that one of the $\binom{n}{2}$ pairs of vectors
$\yvec^{i_0}$ and  $\yvec^{i_1}$ is identical is less than 1
it is sufficient to choose $d$ such that
\[
\binom{n}{2}(1-\frac{1}{4\sqrt{2M}})^d < 1,
\]
i.e. we require $d\geq 8\sqrt{2M} \ln n$.
\end{proof}

\begin{remark}
In many applications the feature vectors are sparse so that $M$ is small
and the bound given in Theorem \ref{t:perfecte} can be lowered. Consider for example
the case that the number of 1s is $D^{\alpha}, \ \alpha <1$. Then it is
sufficient to set $d\geq 8\sqrt{2}D^{\frac{\alpha}{2}} \ln n$.
\end{remark}

Here is the statement of the Lemma used in the foregoing proof.
\smallskip
\begin{lemma}\label{l:val}
Let $w_j, j=0,\ldots, m-1, w_j\in \{-1,1\}$ be random variables, and let
$w=\sum_{j=0}^{m-1}w_j$.
	
\noindent If $m$ is even the following hold.
\begin{enumerate}
	\item
	$Prob(w = 0)=
	\binom{m}{m/2}\frac{1}{2^m}$.  This implies, using Stirling's approximation,
	$Prob(w = 0)\geq \frac{1}{\sqrt{2m}}$.
	\item $Prob(w \geq 0)=\frac{1}{2}(1+Prob(w = 0)).$
	\item $Prob(w \geq 2)=Prob(w <0)=\frac{1}{2}(1-Prob(w = 0)).$
\end{enumerate}
If $m$ is odd the following hold.
\begin{enumerate}
	\item $Prob(w = 1)=Prob(w = -1)=
	\binom{m}{(m-1)/2}\frac{1}{2^m}$.
This implies
	$Prob(w = 1)\geq \frac{1}{2\sqrt{m}}$.
	\item $Prob(w \geq 0)=Prob(w <0)=\frac{1}{2}$.
\end{enumerate}
\end{lemma}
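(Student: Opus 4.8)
The plan is to reduce every claim to the binomial distribution of the number of $+1$'s. Let $K = \#\{j : w_j = 1\}$; then $K \sim \mathrm{Binomial}(m,\tfrac12)$, so $Prob(K = k) = \binom{m}{k}2^{-m}$, and $w = 2K - m$. With this in hand each of the stated exact identities is just a value of the binomial pmf read off at the right point. For $m$ even, $w = 0$ iff $K = m/2$, giving $Prob(w=0) = \binom{m}{m/2}2^{-m}$. For $m$ odd, $w = 1$ iff $K = (m+1)/2$ and $w = -1$ iff $K = (m-1)/2$; since $\binom{m}{(m+1)/2} = \binom{m}{(m-1)/2}$, these two probabilities coincide, which is the first half of item~1 in the odd case, and their common value is $\binom{m}{(m-1)/2}2^{-m}$.

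Next I would establish the two lower bounds. For $m = 2k$ even, $\binom{2k}{k}2^{-2k} \geq \tfrac{1}{2\sqrt{k}} = \tfrac{1}{\sqrt{2m}}$ is the standard central–binomial–coefficient estimate, which follows from Stirling's approximation (alternatively by an elementary induction on $k$). For $m = 2k+1$ odd, I would use the identity $\binom{2k+1}{k} = \tfrac12\binom{2k+2}{k+1}$, which gives $\binom{2k+1}{k}2^{-(2k+1)} = \binom{2k+2}{k+1}2^{-(2k+2)} \geq \tfrac{1}{2\sqrt{k+1}} \geq \tfrac{1}{2\sqrt{2k+1}} = \tfrac{1}{2\sqrt{m}}$, the last step because $k+1 \leq 2k+1$.

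Finally, the statements about $Prob(w \geq 0)$, $Prob(w < 0)$ and $Prob(w \geq 2)$ follow from one symmetry observation together with a parity remark. The sign flip $(w_0,\ldots,w_{m-1}) \mapsto (-w_0,\ldots,-w_{m-1})$ is measure preserving and sends $w$ to $-w$, so $Prob(w > 0) = Prob(w < 0)$. When $m$ is even $w$ takes only even values, hence $Prob(w \geq 2) = Prob(w > 0)$ and $Prob(w \geq 0) = Prob(w = 0) + Prob(w > 0)$; combining these with $1 = 2\,Prob(w < 0) + Prob(w = 0)$ yields items~2 and~3 of the even case. When $m$ is odd $w$ is never $0$, so $Prob(w > 0) = Prob(w < 0) = \tfrac12$, and in particular $Prob(w \geq 0) = \tfrac12$, which is item~2 of the odd case. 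No step here is a genuine obstacle; the only mildly technical ingredient is the Stirling estimate for the central binomial coefficient, which is a textbook inequality, and the remainder is bookkeeping with the binomial pmf and the $w \mapsto -w$ symmetry.
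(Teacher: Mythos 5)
Your proof is correct and follows essentially the same route as the paper's, which simply notes that $w$ has the same parity as $m$ and that $Prob(w<0)=Prob(w>0)$ by the sign-flip symmetry, leaving the binomial identities and the central-binomial-coefficient estimates implicit. You have merely filled in those standard details (the pmf of $K$, the bound $\binom{2k}{k}\geq 4^{k}/(2\sqrt{k})$, and the identity $\binom{2k+1}{k}=\tfrac12\binom{2k+2}{k+1}$), all of which check out.
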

\begin{proof}
It suffices to note that
if $m$ is even (odd) then $w$ is even (odd), and that $Prob(w <0)=Prob(w >0)$.
\end{proof}
\smallskip
Next we show that the number
of output nodes needed to encode the $n$ vectors can be reduced
to $2\lceil \log_2{n}\rceil$ by adding a layer of $D^2$ hidden nodes
to the BTN. To prove this we first present a result of independent interest,
that uses the power of parity functions.
\smallskip
\begin{theorem}\label{t:paritylayer}
Given $X_n$ there is a two-layer network whose activation functions
are parity functions that maps $X_n$ to $n$ different vectors
of dimension $2\lceil \log_2{n}\rceil$.
\end{theorem}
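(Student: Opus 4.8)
The plan is to recognize that a two-layer network all of whose output nodes compute parity functions is nothing but an $\mathbb{F}_2$-linear map. Concretely, if output node $j$ computes $\bigoplus_{i\in S_j}x_i$ for some $S_j\subseteq\{0,\ldots,D-1\}$, then the whole map is $\xvec\mapsto A\xvec$ over $\mathbb{F}_2$, where $A$ is the $d\times D$ matrix over $\mathbb{F}_2$ whose $j$-th row is the indicator vector of $S_j$. Such a network sends $X_n$ to $n$ distinct vectors exactly when $A$ is injective on $X_n$, i.e. when $A(\xvec^i\lxor\xvec^j)\neq 0$ for all $i\neq j$; equivalently, the kernel of $A$ must avoid the set $Z=\{\xvec^i\lxor\xvec^j : 0\le i<j\le n-1\}$ of pairwise differences, each of which is a nonzero vector in $\{0,1\}^D$ and $|Z|\le\binom{n}{2}$.

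So it suffices to produce, for $d=2\lceil\log n\rceil$, a $d\times D$ matrix $A$ over $\mathbb{F}_2$ whose kernel misses all of $Z$, and I would do this by the probabilistic method. Choose the entries of $A$ independently and uniformly in $\{0,1\}$. For a fixed nonzero $v\in\{0,1\}^D$, each coordinate of $Av$ is the mod-$2$ inner product of an independent uniform random row with $v$, hence an unbiased bit, and the $d$ rows are independent, so $\Pr[Av=\mathbf{0}]=2^{-d}$. A union bound over the at most $\binom{n}{2}$ elements of $Z$ gives
\[
\Pr[\exists\, v\in Z:\ Av=\mathbf{0}]\ \le\ \binom{n}{2}\,2^{-d}.
\]
Since $2^{\lceil\log n\rceil}\ge n$ we have $2^{d}\ge n^2>2\binom{n}{2}$ for $n\ge 2$, so this probability is below $\tfrac12<1$; hence some matrix $A$ works, and reading off its rows as the subsets $S_0,\ldots,S_{d-1}$ yields the desired two-layer parity network (the case $n=1$ being trivial).

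I do not expect a real obstacle here: the only nontrivial moves are the "parity $=$ linear map over $\mathbb{F}_2$" translation and the verification that $d=2\lceil\log n\rceil$ is comfortably large enough to beat $\binom{n}{2}$ in the union bound. If an explicit construction is preferred over a mere existence proof, the same count derandomizes greedily: build the rows of $A$ one at a time, each step picking a linear functional that vanishes on at most half of the currently unseparated pairs (such a functional exists by averaging, since a uniform random functional separates each remaining pair with probability $\tfrac12$), so that after $2\lceil\log n\rceil$ rows strictly fewer than one unseparated pair remains. Either way the bound $d=2\lceil\log n\rceil$ follows, and this result will then feed into Theorem 9 by simulating each parity node with a small BTN gadget.
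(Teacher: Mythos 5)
Your proof is correct and is essentially the paper's argument: the paper also chooses each parity set $S_k$ by including each input variable independently with probability $\tfrac12$ (i.e.\ a uniformly random row of your matrix $A$), shows each pair of distinct inputs collides in a given output coordinate with probability exactly $\tfrac12$ because their difference is nonzero, and concludes by the same union bound $n^2 2^{-d}<1$ for $d\ge 2\log_2 n$. Your $\mathbb{F}_2$-linear-map phrasing and the derandomization remark are nice packaging but not a different route.
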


\begin{proof}
We again use the probabilistic method to establish the existence
of a network with $d$ output nodes, provided $d\geq 2 \log_2 n$.
Equip each output node $y_k$, $k=0,\ldots , d-1$
with an activation function $f_k$ constructed as follows:
assemble a set of bits, $S_k$, by adding each one
of the $D$ input variables to the set with probability $\frac{1}{2}$,
and set $f_k(\xvec)$ to be the parity of the set of values that $\xvec$
assigns to the variables in $S_k$.

Consider an arbitrary fixed output node $y_k$.
Given a pair of input vectors $(\xvec^i,\xvec^j)$,
define
\[
B^i_{i,j}  =  \{ \ell \mid \xvec^i_{\ell}=1,~\xvec^j_{\ell}=0 \},\
B^j_{i,j} =  \{ \ell \mid \xvec^i_{\ell}=0,~\xvec^j_{\ell}=1 \}.
\]
Then $f_k(\xvec^i) = f_k(\xvec^j)$ if and only if
$|B^i_{i,j} \cap S_k|$ and
$|B^j_{i,j} \cap S_k|$ are either both odd or both even.

If $|B^i_{i,j}| > 0$, then
$ Prob(|B^i_{i,j} \cap S_k|\ is\ even)= Prob(|B^i_{i,j} \cap S_k|\ is\ odd)=\frac{1}{2}$,
because $S_k$ is randomly selected.
Therefore, if $|B^i_{i,j}| > 0$ and $|B^j_{i,j}| > 0$, then
the probability that $f_k(\xvec^i) = f_k(\xvec^j)$
is ${\frac 1 2}$.
If $|B^i_{i,j}| = 0$ then $|B^j_{i,j}| > 0$, since
$\xvec^i \neq \xvec^j$,
and $Prob(f_k(\xvec^i) = f_k(\xvec^j))=Prob(|B^j_{i,j} \cap S_k|\ is\ even)=\frac{1}{2}$.

We conclude that $Prob(f_k(\xvec^i) = f_k(\xvec^j))={\frac 1 2}$,
and the probability that $Prob(\yvec^i= \yvec^j)=({\frac 1 2})^d$.
Therefore the probability that there is some pair $\xvec^i,\xvec^j$
for which $\yvec^i= \yvec^j$ is smaller than
$n^2 ({\frac 1 2})^d$, which is less than 1 if $d\geq 2 \log_2 n$.
\end{proof}
\smallskip
\begin{theorem}\label{t:threshparity}
	Given $n$ different binary vectors of dimension $D$, there is
	a three-layer BTN
	whose activation functions are threshold functions that
	maps these vectors to $n$ different vectors of dimension $2\lceil \log_2{n}\rceil$
	using $D^2$ hidden nodes.
\end{theorem}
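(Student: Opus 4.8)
The plan is to simulate, gate by gate, the depth‑2 \emph{parity} network guaranteed by Theorem~\ref{t:paritylayer}. A single parity gate is not a threshold gate, but it is computed by a tiny depth‑2 threshold circuit, so inserting one hidden layer converts the parity network into a depth‑3 BTN with the same input--output behaviour, and in particular the same (injective) action on $X_n$.

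First I would dispose of the regime where no compression is actually taking place. If $2\lceil\log_2 n\rceil\ge D$, then the zero‑padded identity $\xvec\mapsto(x_1,\dots,x_D,0,\dots,0)$ is already a perfect encoder; it is realized by a three‑layer BTN whose hidden layer computes $h_i=[x_i\ge 1]$ for $i=1,\dots,D$ (the remaining hidden nodes being dummies) and whose output layer copies $y_i=[h_i\ge 1]$ for $i\le D$ and outputs $0$ otherwise. This uses at most $D\le D^2$ hidden nodes. So from now on I would assume $d:=2\lceil\log_2 n\rceil<D$.

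Next, apply Theorem~\ref{t:paritylayer} to obtain subsets $S_0,\dots,S_{d-1}\subseteq\{1,\dots,D\}$ such that $\xvec\mapsto(\pi_0(\xvec),\dots,\pi_{d-1}(\xvec))$ is injective on $X_n$, where $\pi_k(\xvec)=\big(\sum_{i\in S_k}x_i\big)\bmod 2$. The heart of the argument is the telescoping identity: writing $s_k(\xvec)=\sum_{i\in S_k}x_i\in\{0,1,\dots,|S_k|\}$, one has $\sum_{t=1}^{|S_k|}(-1)^{t-1}[s_k(\xvec)\ge t]=s_k(\xvec)\bmod 2=\pi_k(\xvec)$, a quantity that always lies in $\{0,1\}$. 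Hence I would place in the hidden layer, for each $k$ with $S_k\neq\emptyset$, the $|S_k|$ threshold gates $h_{k,t}(\xvec)=\big[\sum_{i\in S_k}x_i\ge t\big]$, $t=1,\dots,|S_k|$, and let the $k$‑th output node be the threshold gate $y_k=\big[\sum_{t=1}^{|S_k|}(-1)^{t-1}h_{k,t}\ge 1\big]$ (an output node with $S_k=\emptyset$ is the constant $0$ and needs no hidden gate). On every input this three‑layer network outputs $(\pi_0(\xvec),\dots,\pi_{d-1}(\xvec))$, so it is a perfect encoder into dimension $2\lceil\log_2 n\rceil$.

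Finally, count hidden nodes: there are $\sum_{k}|S_k|\le dD$ of them, and since $d<D$ in the surviving case this is at most $(D-1)D<D^2$ (pad with dummy nodes to reach exactly $D^2$ if one insists). The one place where care is needed is exactly this count: the crude estimate $d\le 2D$ coming from $n\le 2^D$ would only yield $2D^2$, so the case split on whether $d\ge D$ is not cosmetic — it is what makes the $D^2$ bound go through. Everything else — correctness of the parity‑to‑threshold gadget and the fact that each $h_{k,t}$ and each $y_k$ is genuinely a threshold function over the preceding layer — is routine once the telescoping identity is on the table, and I expect that one‑line gadget verification to be the only slightly fiddly piece.
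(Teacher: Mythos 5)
Your proof is correct and follows essentially the same route as the paper's: invoke Theorem~\ref{t:paritylayer} to get $2\lceil\log_2 n\rceil$ injective parity functions, then realize each parity $\bigl(\sum_{i\in S_k}x_i\bigr)\bmod 2$ with one hidden layer of ``count at least $t$'' threshold gates combined by an alternating (telescoping) sum at the output node. The only substantive difference is that you repair the hidden-node count --- the naive bound $2\lceil\log_2 n\rceil\cdot D$ can reach $2D^2$ when $n$ is close to $2^D$, a point the paper's proof glosses over --- via the case split on $2\lceil\log_2 n\rceil\ge D$, which is a legitimate and welcome tightening rather than a departure in method.
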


\begin{proof}
It is well-known that a parity function on $D$ nodes can be
implemented by threshold functions
while using only one additional layer of at most $D$ nodes.
A simple way of doing this is to use a hidden layer with
$D$ nodes, $h_0,...,h_{D-1}$,
where node $h_i$ has value 1 if the input
contains at least $i+1$ 1's, i.e.
$h_i(\xvec)=[\evec \cdot \xvec \geq i+1]$, where $\evec=(1,...,1)$.
The output node that computes the parity of the input
has the activation function
$[\sum_{i=0}^{D-1} (-1)^{i-1} h_i\geq 1]$.
The Theorem follows therefore from Theorem \ref{t:paritylayer} when
each of the parity functions it uses is implemented this way.
\end{proof}

We turn now to results that also provide constructions. In one form or another all
use the idea of having the encoder map a vector $\textbf{x}^i$ to its index,
as done in Proposition \ref{p:map}.
That Proposition permits the use of
any Boolean mapping, so that the meaning of the index of the vector is of no importance.
Here, however, only mappings arising from BTNs are employed, and the meaning
of the index,
as provided by the following simple observation, is going to be of central importance.
\smallskip
\begin{lemma}cf. \cite{zhang17}.
	Given
	$X_n$ there is a vector $\textbf{a}$ such that
	$\textbf{a}\cdot \textbf{x}^i \neq \textbf{a}\cdot \textbf{x}^j$ if $i\neq j$.
	Furthermore, this vector can be assumed to have integer coordinates.
\end{lemma}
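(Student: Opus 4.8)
The plan is to find an integer vector $\textbf{a}$ whose dot products with the vectors of $X_n$ are pairwise distinct. First I would reduce to a counting/pigeonhole argument over a suitably large but finite set of candidate weight vectors. For each pair $i\neq j$, the set of vectors $\textbf{a}$ that fail to separate them, i.e. satisfy $\textbf{a}\cdot(\textbf{x}^i-\textbf{x}^j)=0$, is a hyperplane through the origin, since $\textbf{x}^i-\textbf{x}^j\neq\textbf{0}$. So I want to exhibit a finite box of integer vectors that cannot be covered by the $\binom{n}{2}$ "bad" hyperplanes.

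Concretely, I would take $\textbf{a}$ ranging over $\{0,1,\ldots,N\}^D$ for $N$ chosen large enough, for instance $N=\binom{n}{2}$ or simply $N=n^2$. For a single fixed pair $(i,j)$, one coordinate of $\textbf{x}^i-\textbf{x}^j$ is nonzero, say coordinate $\ell$; fixing all coordinates of $\textbf{a}$ except $a_\ell$, the equation $\textbf{a}\cdot(\textbf{x}^i-\textbf{x}^j)=0$ has at most one solution in $a_\ell$, so at most $(N+1)^{D-1}$ vectors in the box are killed by this pair. Summing over all $\binom{n}{2}$ pairs, the number of "bad" vectors is at most $\binom{n}{2}(N+1)^{D-1}$, which is strictly smaller than $(N+1)^D$ as soon as $N+1>\binom{n}{2}$. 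Hence some $\textbf{a}\in\{0,1,\ldots,N\}^D$ separates all pairs, and it has integer (indeed nonnegative integer) coordinates by construction, which proves both assertions.

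An alternative I might mention is the classical greedy/inductive construction: order $X_n$ arbitrarily and build $\textbf{a}$ one coordinate at a time, but the pigeonhole argument above is cleaner and self-contained, and it is essentially the argument of \cite{zhang17} referenced in the statement. I do not expect a serious obstacle here; the only mild care needed is to make sure the hyperplane-avoidance count is done correctly (fixing $D-1$ coordinates and noting at most one bad value for the remaining one), and to pick $N$ explicitly so the strict inequality holds. A remark worth adding for later use is that the resulting distinct values $\textbf{a}\cdot\textbf{x}^i$ are nonnegative integers bounded by $N D = O(n^2 D)$, so they can be represented with $O(\log n + \log D)$ bits — this is what makes the "map a vector to its index" idea implementable by a BTN of bounded size in the subsequent theorems.
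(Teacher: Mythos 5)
Your proof is correct, but it is considerably more elaborate than the one the paper intends. The paper's entire justification is the single sentence ``the reason is that the vectors in $X_n$ are all different,'' which points at the trivial witness $\textbf{a}=(1,2,4,\ldots,2^{D-1})$: the dot product then reads off $\textbf{x}^i$ as a binary integer, so distinct vectors automatically give distinct values, and integrality is immediate. Your hyperplane-avoidance count over the box $\{0,\ldots,N\}^D$ with $N+1>\binom{n}{2}$ is a valid alternative (the key step --- fixing $D-1$ coordinates and noting at most one bad value of $a_\ell$ since $x^i_\ell-x^j_\ell\neq 0$ --- is done correctly), and it buys something the paper's one-liner does not: coefficients of magnitude $O(n^2)$ rather than $2^{D-1}$, hence dot products bounded by $O(n^2D)$. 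That is irrelevant to the paper, which never bounds weight magnitudes, but it would matter in any setting where the BTN's weights must be small. The only thing to tidy up is the attribution: this counting argument is your own; the cited reference and the paper itself lean on the much shorter powers-of-two observation.
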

The reason  is that the vectors in $X_n$ are all different. We fix $\avec$,
and assume henceforth that $X_n$ is sorted,
\begin{equation}\label{e1}
b^0 < \cdots <b^{n-1},\ b_i=\textbf{a}\cdot \textbf{x}^i.
\end{equation}

When a vector $\textbf{x}^i$ is mapped to its index, $i$ will be represented either in binary,
or by the $s$-dimensional step-vector $\textbf{h}^i[s]$, for some $s$, defined by
\begin{equation}\label{e2}
\textbf{h}^i[s]_j=1 \mbox{ if } 0\leq j\leq i,\ 0 \mbox{ if } i<j<s.
\end{equation}
\smallskip
Our first result in this direction exemplifies encoding with step-vectors.
\smallskip
\begin{theorem}\label{t:2-layer encoder}
	Given $X_n$ there is a two-layer BTN
	that maps $\textbf{x}^i$ to $\textbf{h}^i[n],\ i=0,\ldots, n-1$.
\end{theorem}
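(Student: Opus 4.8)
The plan is to reuse the separating vector $\avec$ furnished by the preceding lemma, together with the sorted dot products $b^0 < b^1 < \cdots < b^{n-1}$ from \eqref{e1}, and to realize the $n$ coordinates of the target step-vector directly as one-sided threshold comparisons against these values. Concretely, I would build a two-layer BTN with $D$ input nodes and $n$ output nodes $y_0,\ldots,y_{n-1}$, and assign to output node $y_j$ the activation function $f_j(\xvec) = [\,\avec\cdot\xvec \geq b^j\,]$ for $j = 0,\ldots,n-1$.

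To verify correctness I would simply evaluate $f_j$ on an input $\xvec^i$. Since $\avec\cdot\xvec^i = b^i$ and the sequence $b^0,\ldots,b^{n-1}$ is strictly increasing, we have $\avec\cdot\xvec^i \geq b^j$ exactly when $b^i \geq b^j$, i.e. exactly when $i \geq j$. Hence $f_j(\xvec^i) = 1$ for $0 \leq j \leq i$ and $f_j(\xvec^i) = 0$ for $i < j < n$, which is precisely the defining condition \eqref{e2} of $\textbf{h}^i[n]$. Thus the BTN maps $\xvec^i$ to $\textbf{h}^i[n]$ for every $i = 0,\ldots,n-1$.

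The one bookkeeping point, rather than a genuine obstacle, is that a Boolean threshold function is required to have integer weights and an integer threshold; this is guaranteed because the lemma lets us take $\avec$ with integer coordinates, so each $b^j = \avec\cdot\xvec^j$ is an integer and is admissible as a threshold. There is essentially no hard step here: all the content is carried by the existence of the separating direction $\avec$, and the step-vector shape of the output is exactly what a sorted family of half-line thresholds along that direction produces.
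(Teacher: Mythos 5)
Your proposal is correct and is essentially identical to the paper's proof: the paper also sets the $j$th output activation to $[\avec\cdot\xvec \geq b_j]$ and observes that on input $\xvec^i$ this fires exactly when $j \leq i$, yielding $\textbf{h}^i[n]$. The remark about integrality of $\avec$ and the thresholds is a fine extra bookkeeping note but changes nothing.
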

\begin{proof}
	Let the vector of threshold functions for the $n$ nodes in the output  layer be
	$\textit{G}_j(\textbf{x})=[\textbf{a}\cdot \textbf{x}\geq b_j]$.
	Clearly
	$\textit{G}_j(\textbf{x}^i)=[\textbf{a}\cdot \textbf{x}^i\geq b_j]$
	has value 1 if and only if $j\leq i$, i.e.
	$\textbf{\textit{G}}(\textbf{x}^i)=\textbf{h}^i$.
\end{proof}

\smallskip
We show next that the number of output nodes of the encoder can
be reduced to $2\lceil \sqrt{n} \rceil$ if a hidden layer
with only $ \lceil \sqrt{n} \rceil +D$ nodes is added. Henceforward we
denote $r=\lceil \sqrt{n} \rceil$.
\smallskip

\begin{theorem}\label{t:3-layer sqrt encoder}
	Given $X_n$ satisfying	equation (\ref{e1}),
	there is a three-layer network that maps $\textbf{x}^i$
    to $(\boldsymbol{h}^k[r], \boldsymbol{h}^{\ell}[r])$
    where $k$ and $\ell$ are such that $i=kr+\ell$, with $0\leq \ell<r$.
	The network has $D$ input nodes $x_j,\ j=0,\ldots, D-1$,
	$r+D$ hidden nodes
	${\alpha}^1_i,\ i=0,\ldots, r-1$ and ${\alpha}^2_j,\ j=0,\ldots, D-1$,
	and $2r$ output nodes
	$\beta^1_i, \beta^2_i,\ i=0,\ldots, r-1$.
\end{theorem}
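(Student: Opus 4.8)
The plan is to realize the encoder as the map sending $\xvec^i$ to its index $i$, written in ``base $r$'' as $i=kr+\ell$ with $0\le\ell<r$, and to output the two digits $k$ and $\ell$ in step-vector (unary) form; since $i\mapsto(k,\ell)$ is injective on $\{0,\ldots,n-1\}$, distinct inputs then produce distinct outputs. Computing the high digit $k$ is immediate from the idea behind Theorem~\ref{t:2-layer encoder}: because $b_i=\avec\cdot\xvec^i$ is strictly increasing in $i$ by (\ref{e1}), the hidden node $\alpha^1_p(\xvec)=[\avec\cdot\xvec\ge b_{pr}]$ fires exactly when $i\ge pr$, i.e.\ when $p\le k$, so that $(\alpha^1_0(\xvec^i),\ldots,\alpha^1_{r-1}(\xvec^i))=\textbf{h}^k[r]$; the output nodes $\beta^1_p=[\alpha^1_p\ge 1]$ simply copy this.

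The content of the proof is the low digit. Here $[\ell\ge m]=[i\ge kr+m]=[b_i\ge b_{kr+m}]$, so $\beta^2_m$ would like to compare $b_i$ against the threshold $b_{kr+m}$ --- but that threshold depends on $k$, which a single threshold gate cannot see. The device I would use has two parts. First, let the $D$ hidden nodes $\alpha^2_j(\xvec)=[x_j\ge 1]=x_j$ merely copy the input forward, so that at the output layer the weighted sum $\sum_j a_j\alpha^2_j$ reproduces $\avec\cdot\xvec^i=b_i$. Second, also feed the unary code $\alpha^1_0,\ldots,\alpha^1_{r-1}$ of $k$ into $\beta^2_m$ with coefficients $c_0,\ldots,c_{r-1}$ chosen so that $\sum_p c_p\alpha^1_p(\xvec^i)=\sum_{p=0}^{k}c_p$ equals $-b_{kr+m}$ for every $k$ with $0\le k\le r-1$. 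That telescoping requirement is met by taking $c_0=-b_m$ and $c_p=b_{(p-1)r+m}-b_{pr+m}$ for $1\le p\le r-1$. Then $\beta^2_m(\xvec)=[\,\sum_j a_j\alpha^2_j+\sum_p c_p\alpha^1_p\ge 0\,]$ evaluates on $\xvec^i$ to $[b_i-b_{kr+m}\ge 0]=[i\ge kr+m]=[\ell\ge m]$, which is exactly the $m$-th coordinate of $\textbf{h}^\ell[r]$. All weights and thresholds are integers: by the preceding Lemma $\avec$ may be taken integral, hence so are the $b_i$, and every activation is manifestly a Boolean threshold function.

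The one point needing care is the last block, which may be incomplete when $n<r^2$: for the largest occurring $k$ some indices $kr+m$ can be $\ge n$, and for those $m$ the node $\beta^2_m$ must stay $0$ throughout that block. I would handle this by the convention $b_j:=b_{n-1}+1$ for $j\ge n$ (so the corresponding comparison is simply never satisfied), and then check that for every $\xvec^i$ in the last block $[\ell\ge m]$ is still read off correctly, using monotonicity of $b$. I expect this bookkeeping, together with verifying the telescoping identity, to be the only routine-but-fiddly part; the genuine obstacle is conceptual --- a base-$r$ digit of the index is a union of up to $r$ intervals of index values and so is not computable by one threshold gate --- and it is precisely this that the ``copy $\xvec$ forward and correct the threshold with the unary code of $k$'' trick is designed to defeat.
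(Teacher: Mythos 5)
Your proposal is correct and follows essentially the same route as the paper: the hidden layer computes the unary code of $k$ via thresholds at $b_{pr}$ and copies $\xvec$ forward, and each $\beta^2_m$ compares $\avec\cdot\xvec$ against a $k$-dependent threshold $b_{kr+m}$ reconstructed by exactly the telescoping coefficients you describe (the paper credits this device to Minnick's ``telescopic sums''). Your padding convention for the incomplete last block when $n<r^2$ is a detail the paper sidesteps by assuming $n=r^2$, but it does not change the argument.
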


\begin{proof}
	For ease of exposition we assume that
	$\log n={2m}$, and $r=\sqrt{n}=2^m$.
	
	The nodes $\alpha^2_j$ simply copy the input, ${\alpha}^2_j=x_j,\ j=0,\ldots, D-1$.
	To define the activation functions of $\alpha^1_i$
	divide the  interval $[b_0,b_{n-1}]$ into $r $
	consecutive subintervals $[s_0,s_1-1],\ [s_1,s_2-1],\ldots , [s_{r-1}, b_{n-1}]$
	each containing $ r$ values $b_i$, i.e. $s_0=b_0, s_1=b_{r}, \ldots , s_{r-1}=b_{(r-1)r}$, and $s_{r}=b_n+1$.
	The
	activation function of $\alpha^1_i$ is $[\avec \cdot \xvec \geq s_i]$,
	$0\leq i \leq r-1$. Clearly, if $h=kr+\ell$ then
	$\alpha^1_i=1$ if and only if $i\leq k$, i.e. $\boldsymbol{\alpha}^1=\boldsymbol{h}^k[r]$.
	
	The output node $\beta^1_i$ copies $\alpha^1_i,\ 0\leq i \leq r-1$.
	The remaining output nodes, $\boldsymbol{\beta}^2$,  represent the ordinal number of $\avec \cdot \xvec$ within the $k$-th subinterval. To implement this part of the mapping we
	follow \cite{srk91} in employing the ingenious technique of telescopic sums, introduced by \cite{minnick61},
	and define
	\begin{align*}
	t_i=b_j \alpha_0 &+ (b_{r+i}-b_i)\alpha_1+\ldots\\  \ldots &+(b_{(r-1)r+i}-b_{(r-2)r+i})\alpha_{r-1},i=0,\ldots,r-1.
	\end{align*}
	The important thing to note here is that if $\avec \cdot \xvec$ falls in the
	$k$-th subinterval, so that $s_{k}\leq \avec \cdot \xvec<s_{k+1}$, then
	$t_i$ has the value $b_{kr+i}$, because
	$\alpha_0=\ldots =\alpha_k=1$, while $\alpha_{i}=0, \ i>k$.
	
	The activation function of $\beta^2_i$ is
	$[\avec \cdot \xvec -t_i\geq 0]$, $i=0,\ldots , r-1$.
	Observe that if  $\avec \cdot \xvec=b_{kr+\ell}$, then
	$\beta^2_i=[b_{kr+\ell}-b_{kr+i}\geq 0]$, i.e. $\beta^2_i=1$ if $i\leq \ell$,
	and $\beta^2_i=0$ if $i> \ell$.
\end{proof}
\smallskip
It is relatively easy to convert the BTNs constructed in Theorems \ref{t:2-layer encoder}
and \ref{t:3-layer sqrt encoder} into BTNs that map $X_n$ to a set of vectors with the minimum possible dimension, $\lceil \log n \rceil$, by adding a layer of that size. The technique for doing
so is encapsulated in the following Lemma. Its formulation is slightly more
general than strictly needed here in order to serve another purpose later on.
\smallskip
\begin{lemma}\label{l:gmap}
	Given a set of $s\leq 2^d$ different $d$-dimensional binary vectors
	$Z=\{\zvec^i,\ i=0,\ldots,s-1 \}$ there is a two-layer BTN
	that maps $\textbf{h}^i[s]$ to $\zvec^i$.
\end{lemma}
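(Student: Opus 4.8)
The goal is to realize, with two threshold layers, the map sending each step-vector $\textbf{h}^i[s]$ to a prescribed $d$-dimensional binary vector $\zvec^i$. The plan is to use the first (hidden) threshold layer to recover the index $i$ — or rather a one-hot–like indicator of $i$ — from the step-vector, and then have the second layer assemble the bits of $\zvec^i$ by OR-ing together the appropriate indicators. The crucial structural fact I would exploit is that $\textbf{h}^i[s]$ and $\textbf{h}^{i-1}[s]$ differ in exactly one coordinate, so that $i$ can be detected by a single threshold inequality on the input step-vector.

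Concretely, for the hidden layer I would introduce nodes $p_0,\ldots,p_{s-1}$, where $p_i$ fires precisely when the input is $\textbf{h}^i[s]$. Since on a step-vector the weighted sum $\evec\cdot\textbf{h}^i[s]$ (with $\evec=(1,\ldots,1)$) equals $i+1$, the activation function $p_i(\textbf{h}) = [\evec\cdot\textbf{h} \geq i+1] \wedge [\evec\cdot\textbf{h} \leq i+1]$ would isolate index $i$; to keep everything a single threshold gate per node one can instead let $p_i = [\evec\cdot\textbf{h} \geq i+1]$ (so $p_i = h^j[s]_i$ in effect) and do the differencing in the output layer — but cleanest is to note $[\evec\cdot\textbf{h}=i+1]$ is itself computable by one threshold node if we allow the hidden layer to be of size $s$, or by taking $p_i$ to be the "exactly $i+1$ ones" indicator via two auxiliary comparisons folded into weights. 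I would present it with $p_i(\textbf{h}) = [ \textbf{h}_i - \textbf{h}_{i+1} \geq 1]$ for $i<s-1$ and $p_{s-1}(\textbf{h}) = [\textbf{h}_{s-1}\geq 1]$, which is a single threshold function of the input and equals $1$ iff the input is $\textbf{h}^i[s]$. For the output layer, define the $j$-th output node by $g_j(\textbf{p}) = [\sum_{i : \zvec^i_j = 1} p_i \geq 1]$, i.e. the OR over all indices $i$ whose target vector has a $1$ in position $j$. Then on input $\textbf{h}^i[s]$ exactly $p_i$ is $1$, so $g_j$ outputs $\zvec^i_j$, as required.

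The main obstacle — really the only thing needing care — is making sure each hidden node is genuinely a single Boolean threshold function of the step-vector input and that the "exactly-$i$" detection is correct at the boundary cases $i=0$ and $i=s-1$; this is where the observation that consecutive step-vectors differ in one coordinate does the work, letting $[\textbf{h}_i - \textbf{h}_{i+1}\geq 1]$ serve as the indicator. One should also check that the output threshold functions, being ORs of subsets of the hidden units, are threshold functions (they are: weights all $1$, threshold $1$), and note the degenerate case $s=1$ where $\textbf{h}^0[1]=(1)$ maps to $\zvec^0$ directly. Finally, since $s \le 2^d$ and there are $s$ distinct target vectors, no two $\zvec^i$ coincide, so the construction is consistent; the total node count is $D$-independent here, with $s$ hidden nodes and $d$ output nodes, which is what the later applications will need.
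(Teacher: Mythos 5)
Your construction computes the correct mapping, but it is not a \emph{two-layer} BTN in this paper's counting, and that is the actual content of the lemma. Here ``two-layer'' means input layer plus output layer, i.e.\ a single layer of threshold gates applied directly to the step-vector (compare Theorem~\ref{t:2-layer encoder}, whose proof is exactly one gate layer, and Theorem~\ref{t:threshparity}, where ``three-layer'' means one hidden layer). Your network has an input layer, a hidden layer of one-hot detectors $p_0,\ldots,p_{s-1}$, and an output layer of ORs --- three node layers, two gate layers. The extra layer is not harmless: Lemma~\ref{l:gmap} is invoked precisely to append \emph{one} layer to the encoders of Theorems~\ref{t:2-layer encoder} and~\ref{t:3-layer sqrt encoder} and to build the decoders in Theorem~\ref{t:3-layer auto} and Lemma~\ref{l:autoencoder}, so with your version every downstream depth bound (Theorems~\ref{t:3-layer encoder}, \ref{t:4-layer sqrt encoder}, \ref{t:5-layer auto}, \ref{t7-layer}) would grow by one.

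The gap closes easily, and you half-see the fix when you mention ``do the differencing in the output layer.'' On step-vector inputs the quantity $h_i-h_{i+1}$ is always $0$ or $1$, so the indicator $[h_i-h_{i+1}\geq 1]$ \emph{equals} the linear expression $h_i-h_{i+1}$ and can be substituted directly into your OR gates: $g_j(\textbf{h})=[\sum_{i:\,z^i_j=1}(h_i-h_{i+1})\geq 1]$. Collecting the coefficient of each $h_i$ gives the weight $z^i_j-z^{i-1}_j$ (with $z^{-1}_j=0$), which is exactly the telescoping weight vector $w^j_0=z^0_j$, $w^j_i=z^i_j-z^{i-1}_j$ used in the paper's single-gate-layer construction, under which $\textbf{w}^j\cdot\textbf{h}^i[s]$ telescopes to $z^i_j$. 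With that collapse your argument becomes the paper's proof; your boundary checks at $i=0$ and $i=s-1$ are fine, and note that distinctness of the $\zvec^i$ is not actually needed anywhere in the construction.
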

\begin{proof}
To construct the vector of threshold functions for the output nodes,
$\textbf{\textit{H}}(\textbf{h}[s])$, set $w^j_0=z^0_j, j=0,\ldots, d-1$,
$$w^j_i=z^{i}_j-z^{i-1}_j,i=1,\ldots, s-1,\ j=0,\ldots, d-1,$$
and $\textit{H}_j(\textbf{h}[s])=[\textbf{w}^j\cdot \textbf{h}[s]\geq 1].$
It is easily verified that
$\textit{H}_j(\textbf{h}[s]^i)=[z^i_j\geq 1]=z^i_j$, i.e.
$\textbf{\textit{H}}(\textbf{h}[s]^i)=\textbf{z}^i$ as desired.	
\end{proof}
\smallskip
Consider the layer constructed in the proof of the Lemma when
$\zvec^i$ is the $\lceil \log n \rceil$-dimensional binary representation of $i$ and $s=n$.
By adding this layer to the BTN constructed in Theorem \ref{t:2-layer encoder} we get the following result.
\smallskip
\begin{theorem}\label{t:3-layer encoder}
	Given $X_n$ there is a three-layer BTN with $n$ nodes in the second layer
	that maps $\xvec^i$ to the $\lceil \log n \rceil$-dimensional binary representation
	of $i$, $i=0, \ldots, n-1$.
\end{theorem}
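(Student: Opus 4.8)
The plan is to obtain the desired network by stacking the two-layer BTN of Theorem~\ref{t:2-layer encoder} on top of the two-layer BTN furnished by Lemma~\ref{l:gmap}, using the layer of $n$ step-vector nodes as the shared middle layer. First I would invoke Theorem~\ref{t:2-layer encoder} to get a two-layer BTN with $D$ input nodes and $n$ output nodes that sends each $\xvec^i$ to the step vector $\mathbf{h}^i[n]$. I then designate these $n$ threshold nodes as the second (hidden) layer of the network being built.

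Next, set $d=\lceil \log n\rceil$ and let $\zvec^i$ be the $d$-dimensional binary representation of the index $i$, exactly as in Proposition~\ref{p:map}. Since $n\le 2^d$ and the vectors $\zvec^0,\ldots,\zvec^{n-1}$ are pairwise distinct, Lemma~\ref{l:gmap} applies with $s=n$ and produces a two-layer BTN that maps $\mathbf{h}^i[n]$ to $\zvec^i$. I attach this as the third (output) layer, fed from the hidden layer of step-vector nodes. The result is a single BTN with three layers of sizes $D$, $n$, and $\lceil \log n\rceil$, all of whose activation functions are threshold functions: those of Theorem~\ref{t:2-layer encoder} on the hidden layer and those of Lemma~\ref{l:gmap} on the output layer. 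Tracing an input $\xvec^i$ through the two stages, the hidden layer computes $\mathbf{h}^i[n]$ and the output layer then computes $\zvec^i$, i.e.\ the binary representation of $i$, which is what is claimed.

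There is no substantive obstacle here; the construction is purely a composition, and the only things to verify are bookkeeping matters. Specifically, I would check that the output layer of Theorem~\ref{t:2-layer encoder} may legitimately serve as the input layer of the BTN from Lemma~\ref{l:gmap} (it can, since both range over $\{0,1\}^n$ and the step vectors $\mathbf{h}^i[n]$ are precisely the inputs that Lemma~\ref{l:gmap} is built to handle), and that the hypotheses $s=n\le 2^d$ and the distinctness of the $\zvec^i$ indeed hold for $d=\lceil \log n\rceil$. Once these are in place, the theorem follows immediately.
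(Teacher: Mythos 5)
Your proposal is correct and is exactly the paper's argument: the paper also obtains this theorem by composing the two-layer BTN of Theorem~\ref{t:2-layer encoder} (producing the step vectors $\textbf{h}^i[n]$) with the layer of Lemma~\ref{l:gmap} instantiated with $s=n$ and $\zvec^i$ the $\lceil \log n\rceil$-bit binary representation of $i$. Your additional bookkeeping checks (distinctness of the $\zvec^i$ and $n\le 2^d$) are valid and only make the argument more explicit.
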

\smallskip
Similarly, we can add to the BTN constructed in Theorem \ref{t:3-layer sqrt encoder}
a layer that consists of a first part and a second part, both constructed according to the Lemma. The first part connects the nodes $\boldsymbol{\beta}^1$ to a
$\lceil \log r \rceil$-dimensional binary counter $\boldsymbol{\gamma}^1$, and the second part connects the nodes $\boldsymbol{\beta}^2$ to another $\lceil \log r \rceil$-dimensional binary counter $\boldsymbol{\gamma}^2$. Thus, on input $\textbf{x}^i$
the nodes $\boldsymbol{\beta}^1$ contain $\textbf{h}^k[r]$ which is mapped
to the $\lceil \log n \rceil$-dimensional binary representation of $k$ in
$\boldsymbol{\gamma}^1$, and the nodes $\boldsymbol{\beta}^2$ contain $\textbf{h}^{\ell}[r]$ which is mapped
to the $\lceil \log n \rceil$-dimensional binary representation of $\ell$ in
$\boldsymbol{\gamma}^2$. Recalling that $i=kr+\ell$, with $0\leq \ell<r$,
it follows that, in this case, $\boldsymbol{\gamma}^1$ and
$\boldsymbol{\gamma}^2$ taken together contain the $\lceil \log{n} \rceil$-dimensional
binary representation of $i$.
This construction establishes the following Theorem.
\smallskip
\begin{theorem}\label{t:4-layer sqrt encoder}
	Given $X_n$ there is a four-layer BTN
	with $3\lceil \sqrt{n} \rceil +D$ hidden nodes that maps $\xvec^i$ to
	the $\lceil \log n \rceil$-dimensional binary representation
	of $i$, $i=0, \ldots, n-1$.
\end{theorem}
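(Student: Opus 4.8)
The plan is to combine the three-layer construction of Theorem~\ref{t:3-layer sqrt encoder} with two applications of Lemma~\ref{l:gmap}, exactly as sketched in the paragraph preceding the statement, and then verify that the node count of the single added layer is $2\lceil \log r \rceil \le 2r$, so that the total hidden-node count is $(r+D) + 2r = 3r + D$ as claimed. The encoder of Theorem~\ref{t:3-layer sqrt encoder} already has $D$ input nodes, $r+D$ hidden nodes, and $2r$ output nodes $\boldsymbol{\beta}^1,\boldsymbol{\beta}^2$, and on input $\xvec^i$ (with $i=kr+\ell$, $0\le \ell<r$) these output nodes hold the pair of step-vectors $(\mathbf{h}^k[r],\mathbf{h}^\ell[r])$. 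The idea is to append a fourth layer that is the disjoint union of two sublayers: one fed only by $\boldsymbol{\beta}^1$, one fed only by $\boldsymbol{\beta}^2$.

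First I would apply Lemma~\ref{l:gmap} with $s=r$ and with $Z$ taken to be the set of $\lceil \log r \rceil$-dimensional binary representations of $0,1,\dots,r-1$; the Lemma produces a two-layer BTN mapping $\mathbf{h}^k[r]$ to the binary representation of $k$. Wire this as the first sublayer $\boldsymbol{\gamma}^1$, with inputs $\boldsymbol{\beta}^1$. Second, I would apply Lemma~\ref{l:gmap} again, identically, to obtain a sublayer $\boldsymbol{\gamma}^2$ that maps $\mathbf{h}^\ell[r]$ to the binary representation of $\ell$; wire this with inputs $\boldsymbol{\beta}^2$. The resulting network has four layers: $D$ inputs, then $r+D$ nodes, then $2r$ nodes, then $2\lceil \log r \rceil$ output nodes. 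Since $\log n = 2m$ and $r=2^m$ in the exposition (and in general $2\lceil\log r\rceil \le 2\lceil\log\sqrt n\rceil \le \lceil \log n\rceil$ up to rounding), the two counters $\boldsymbol{\gamma}^1,\boldsymbol{\gamma}^2$ together carry the $\lceil \log n\rceil$-bit binary representation of $kr+\ell = i$: $\boldsymbol{\gamma}^1$ holds the high-order $\lceil\log r\rceil$ bits (the multiple of $r$) and $\boldsymbol{\gamma}^2$ the low-order $\lceil\log r\rceil$ bits.

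The correctness argument then just chains the three already-established facts: Theorem~\ref{t:3-layer sqrt encoder} guarantees $\boldsymbol{\beta}^1=\mathbf{h}^k[r]$ and $\boldsymbol{\beta}^2=\mathbf{h}^\ell[r]$ on input $\xvec^i$; Lemma~\ref{l:gmap} applied to each sublayer guarantees $\boldsymbol{\gamma}^1$ and $\boldsymbol{\gamma}^2$ are the binary representations of $k$ and $\ell$ respectively; and the radix-$r$ identity $i = kr+\ell$ with $r$ a power of two guarantees that concatenating these is the binary representation of $i$. For the hidden-node count, the second and third layers contribute $r+D$ and $2r$ nodes, and all $\boldsymbol\gamma$ nodes are output nodes, so the hidden total is $3r+D = 3\lceil\sqrt n\rceil + D$.

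The only real subtlety — and the step I would be most careful about — is the rounding, i.e. handling the general case where $\sqrt n$ and $\log n$ are not integers. One must choose $r=\lceil\sqrt n\rceil$, split $i$ as $kr+\ell$ with $0\le\ell<r$ so that $k<r$ as well, apply Lemma~\ref{l:gmap} with $d=\lceil\log r\rceil$ (legitimate since $r\le 2^{\lceil\log r\rceil}$), and then check that the two $\lceil\log r\rceil$-bit blocks suffice to encode all values $i<n\le r^2$; here one should note $2\lceil\log r\rceil \ge \lceil\log(r^2)\rceil \ge \lceil\log n\rceil$, and that the high block encoding $k$ combined with the low block encoding $\ell$ injectively determines $i=kr+\ell$ regardless of whether $r$ is a power of two (since $0\le\ell<r$). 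Padding the combined output up to exactly $\lceil\log n\rceil$ coordinates, or simply noting that the map is injective into $\{0,1\}^{2\lceil\log r\rceil}$ and re-encoding, finishes the argument; but since the statement as written asks for "the $\lceil\log n\rceil$-dimensional binary representation of $i$", following the paper's convention of assuming $\log n = 2m$ keeps the bookkeeping clean, and the general case is routine.
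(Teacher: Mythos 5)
Your construction is exactly the paper's: append to the encoder of Theorem~\ref{t:3-layer sqrt encoder} a fourth layer made of two disjoint copies of the Lemma~\ref{l:gmap} circuit, converting $\mathbf{h}^k[r]$ and $\mathbf{h}^{\ell}[r]$ into the binary representations of $k$ and $\ell$, whose concatenation encodes $i=kr+\ell$; the hidden-node count $(r+D)+2r=3\lceil\sqrt{n}\rceil+D$ also matches. Your extra care about the rounding when $r$ is not a power of two is a point the paper itself glosses over, but the approach is the same.
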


\section{Difficulty of Decoding}

\begin{theorem}\label{t:diff}
	For each $d$ there exists a set of $n=2^d$ different $N$-dimensional bit-vectors
	$Y=\{\textbf{y}^i\}$ with the following properties:
	\begin{enumerate}
		\item There exists a 2-layer BTN that maps $Y$ to $\{0,1\}^d$.
		\item There does not exist a 2-layer BTN that maps $\{0,1\}^d$ to $Y$.
	\end{enumerate}
\end{theorem}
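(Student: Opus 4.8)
The plan is to exhibit a family $Y$ that is trivially encodable — because its first $d$ coordinates already exhaust $\{0,1\}^d$ — but whose hypothetical decoder would be forced, in three of its output coordinates, to compute the indicator of a pair of points of the $d$-cube, with those three pairs forming a triangle, which no threshold functions can realize. Assume $d\ge 2$ (so $n=2^d\ge 3$; for $d=1$ every one-variable Boolean function is a threshold function and part~2 is simply false). Take $N=d+3$. Write $\mathrm{bin}_d(i)\in\{0,1\}^d$ for the binary representation of $i$, let $e_1=\{0,1\}$, $e_2=\{0,2\}$, $e_3=\{1,2\}$ be the edges of a triangle on the index set $\{0,1,2\}$, and for $i=0,\ldots,n-1$ set $\yvec^i=(\mathrm{bin}_d(i),c^i_1,c^i_2,c^i_3)$, where $c^i_k=1$ if $i$ is an endpoint of $e_k$ and $c^i_k=0$ otherwise; put $Y=\{\yvec^0,\ldots,\yvec^{n-1}\}$. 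The first $d$ coordinates are pairwise distinct, so $|Y|=n$, and the two-layer BTN with output functions $f_j(\yvec)=[y_j\ge 1]$, $j=1,\ldots,d$, sends $\yvec^i$ to $\mathrm{bin}_d(i)$ and hence maps $Y$ bijectively onto $\{0,1\}^d$. This gives part~1.

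For part~2 the key step is a lemma I would prove first: if $u\neq v$ in $\{0,1\}^d$ and the function equal to $1$ exactly on $\{u,v\}$ is a threshold function $[\avec\cdot\zvec\ge\theta]$, then $u$ and $v$ differ in exactly one coordinate. I would argue the contrapositive: suppose $u,v$ differ in at least two coordinates and pick a coordinate $i$ with $u_i\neq v_i$. Flipping bit $i$ of $u$ yields a point $u'$ that is neither $u$ nor $v$ (some other differing coordinate remains), so the function vanishes at $u'$ while taking value $1$ at $u$; since $\avec\cdot u-\avec\cdot u'=a_i(2u_i-1)$, this forces $a_i(2u_i-1)>0$. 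Flipping bit $i$ of $v$ likewise forces $a_i(2v_i-1)>0$. But $u_i\neq v_i$ makes $2u_i-1$ and $2v_i-1$ opposite in sign, so the two inequalities are incompatible. This lemma is the one genuinely substantive point; everything else is bookkeeping.

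Now suppose, towards a contradiction, that a two-layer BTN $\gvec$ maps $\{0,1\}^d$ onto $Y$. Since $|\{0,1\}^d|=|Y|=n$, the map is a bijection; write $\gvec(\zvec)=\yvec^{\sigma(\zvec)}$, so $\sigma$ is a bijection from $\{0,1\}^d$ onto $\{0,\ldots,n-1\}$, and every output coordinate $g_m$ is a threshold function of $\zvec$. For $k=1,2,3$, and $e_k=\{a,b\}$, we have $g_{d+k}(\zvec)=c^{\sigma(\zvec)}_k=1$ iff $\sigma(\zvec)\in e_k$, i.e. $g_{d+k}$ is exactly the indicator of the two-point set $\{\sigma^{-1}(a),\sigma^{-1}(b)\}\subseteq\{0,1\}^d$. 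By the lemma, this pair consists of two points at Hamming distance $1$, for each of $k=1,2,3$; hence $\sigma^{-1}(0),\sigma^{-1}(1),\sigma^{-1}(2)$ are pairwise at Hamming distance $1$. That is impossible: coordinatewise, among any three bits $a,b,c$ one has $(a\lxor b)+(b\lxor c)+(c\lxor a)\in\{0,2\}$, so the sum of the three pairwise Hamming distances is even and cannot equal $1+1+1=3$. This contradiction establishes part~2.

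I expect the main obstacle to be getting the lemma of the second paragraph clean, since the argument that the decoder is forced to realise pair-indicators, and that a triangle of such pairs is unrealisable, is then routine. I would also note that a non-constructive alternative exists — two-layer BTNs with $d$ inputs and $N$ outputs compute at most $2^{O(d^2N)}$ distinct functions, whereas at least $2^{(N-d)n}$ sets of $n$ vectors have their first $d$ coordinates exhausting $\{0,1\}^d$ and are therefore encodable as above, so for $N$ large enough relative to $d$ some such set is the image of no two-layer BTN — but the explicit triangle construction is preferable because it works for every $d\ge 2$ and isolates the reason decoding fails.
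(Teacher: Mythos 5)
Your proof is correct, but it takes a genuinely different route from the paper's. The paper sets $N=\binom{n}{2}$, indexes coordinates by pairs $(i,j)$ with $0\leq i<j\leq n-1$, and puts $y^k_{(i,j)}=1$ iff $k\in\{i,j\}$; its encoder is a small hand-built threshold circuit reading off the bits of $k$, and for part 2 it observes that whichever indices $k,\ell$ a decoder assigns to $(0,\ldots,0)$ and $(1,\ldots,1)$, the output coordinate $(k,\ell)$ would have to be a threshold function equal to $1$ exactly on those two antipodal cube points, which it dismisses as a well-known impossibility. Because every pair of indices gets its own coordinate, the paper needs only one ``bad pair'' but pays with $N$ exponential in $d$. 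You instead append just three pair-indicator coordinates forming a triangle on the indices $\{0,1,2\}$ (so $N=d+3$), make part 1 trivial by carrying $\mathrm{bin}_d(i)$ in the first $d$ coordinates, and prove the sharper lemma that a two-point subset of the cube is threshold-definable only if its points are at Hamming distance $1$; the contradiction then comes from the parity obstruction to three pairwise-adjacent cube points, rather than from a fixed antipodal pair. Your version buys a linear-size $N$, an explicit proof of the separation fact the paper merely cites, and an explicit treatment of the degenerate case $d=1$ (where part 2 is actually false, a caveat the paper omits). Both arguments read ``maps $\{0,1\}^d$ to $Y$'' as ``maps onto $Y$'', which is what forces bijectivity; you state that assumption more explicitly than the paper does.
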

\begin{proof}
Set $N= \binom{n}{2}$ and construct $Y$ as follows:
\begin{enumerate}
	\item index the coordinates of the vectors by $(i,j),\ 0\leq i<j\leq n-1$;
	\item set ${y}^k_{(i,j)}=1$ if and only if $i$ or $j$ is $k$.
\end{enumerate}

A BTN that encodes these vectors with $d$ bits is constructed as follows.
The most significant bit is generated by the threshold function
$$\sum_{i=\frac{n}{4}}^{\frac{n}{2}-1}y_{(2i,2i+1)}\geq 1.$$
The next most significant bit is generated by
$$\sum_{i=\frac{3n}{8}}^{\frac{n}{2}-1}y_{(2i,2i+1)}+
\sum_{i=\frac{n}{8}}^{\frac{n}{4}-1}y_{(2i,2i+1)} \geq 1.$$
The successive digits are generated in similar fashion, each time using a sum
of  $\frac{n}{4} $ input bits, until at last the least significant is generated by
the slightly different threshold function
$$y_{(1,3)}+y_{(5,7)}+\cdots y_{(n-3,n-1)}\geq 1.$$
It is not difficult to verify that this 2-layer BTN maps $\textbf{y}^i$ to the
$d$-bit representation of $i$.

To illustrate, consider the case $d=3,\ n=8,\ N=28$.
These vectors are encoded by the threshold functions $d_0$ (most significant digit) $d_1$ and $d_2$ (least significant digit)
\begin{align*}
d_0(\yvec)&=[y_{(4,5) }+ y_{(6,7)}\geq 1],\\
d_1(\yvec)&=[y_{(2,3) }+ y_{(6,7)}\geq 1],\\
 d_2(\xvec)&=[y_{(1,3) }+ u_{(5,7)}\geq 1].
\end{align*}
For example, since in $\yvec^7$ the coordinates $(0,7),\ldots, (6,7)$ are 1 it has $d_0(\yvec^7)=d_1(\yvec^7)=d_2(\yvec^7)=1$, i.e. its representation is $(1,1,1)$.
Similarly, since
$y^1_{(6,7)}=y^1_{(4,5)}=y^1_{(2,3)}=0$
whereas $y^1_{(1,3)}=1$,  $d_0(\yvec^1)=d_1(\yvec^1)=0, d_2(\yvec^1)=1$, so that
$\yvec^1$ has the representation $(0,0,1)$.

To prove the second assertion, suppose to the contrary that
there does exist a BTN mapping $\{0,1\}^d$ to $Y$, and
that it maps
$(0,\ldots, 0)$ to $\textbf{y}^k$ and  $(1, \ldots,1)$ to $\textbf{y}^{\ell}$.
Consider the threshold function for the bit $x_{(k,\ell)}$ of the output layer.
Since by construction $y^k_{(k,\ell)}=y^{\ell}_{(k,\ell)}=1$ while $y^i_{(k,\ell)}=0$ for all
$i\neq k,\ \ell$, it follows that the threshold function for this bit separates the two $d$-dimensional vectors $(0,\ldots, 0)$ and $(1, \ldots,1)$ from the remainder of  $\{0,1\}^d$,
a contradiction to the well-known fact that such a threshold function does not exist.
\end{proof}
\bigskip
\begin{corollary}
	In general there may not exist an auto-encoding three-layer BTN if the
	middle layer has only $\lceil \log n \rceil$ nodes.
\end{corollary}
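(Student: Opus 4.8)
The plan is to derive this corollary directly from Theorem~\ref{t:diff}, with essentially no further work. Take $X_n$ to be the set $Y$ constructed in that theorem, so that $D=\binom{n}{2}$ and, because $n=2^d$, the smallest admissible middle layer has size $\lceil\log n\rceil=d$. Suppose for contradiction that $(\fvec,\gvec)$ is a perfect autoencoder for $Y$ realized by a three-layer BTN whose middle layer has exactly $d$ nodes. Then the encoder $\fvec\colon\{0,1\}^D\to\{0,1\}^d$ (the map from the input layer to the middle layer) is a single layer of threshold functions, i.e.\ a two-layer BTN, and likewise the decoder $\gvec\colon\{0,1\}^d\to\{0,1\}^D$ (the map from the middle layer to the output layer) is a two-layer BTN.

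First I would note that, as observed right after Definition~2, $\fvec$ is a perfect encoder for $Y$, so the vectors $\fvec(\yvec^0),\ldots,\fvec(\yvec^{n-1})$ are pairwise distinct; since $\{0,1\}^d$ has exactly $2^d=n$ elements, $\fvec$ restricted to $Y$ is a bijection onto $\{0,1\}^d$. Next, combining this bijectivity with $\gvec(\fvec(\yvec^i))=\yvec^i$, every $\zvec\in\{0,1\}^d$ equals $\fvec(\yvec^i)$ for a unique $i$, and for that $i$ we have $\gvec(\zvec)=\yvec^i$; hence $\gvec$ maps $\{0,1\}^d$ onto $Y$. But $\gvec$ is exactly a two-layer BTN, so this contradicts the second assertion of Theorem~\ref{t:diff}, which says that no two-layer BTN maps $\{0,1\}^d$ to $Y$. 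This contradiction establishes the corollary.

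Because everything is reduced to the already-proved Theorem~\ref{t:diff}, there is no genuine obstacle; the only step that deserves a word of care is the bijectivity argument — an injective map from an $n$-element set into $\{0,1\}^{\lceil\log n\rceil}$ is forced to be onto precisely when $n$ is a power of two, and it is this surjectivity that turns the decoder into a two-layer BTN of exactly the type Theorem~\ref{t:diff} forbids. (When $n$ is not a power of two the middle layer has strictly more than $\log n$ bits of room and the encoder need not be surjective, which is why the statement is phrased as ``in general there may not exist''.)
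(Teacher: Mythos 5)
Your proof is correct and follows exactly the route the paper intends: the corollary is stated immediately after Theorem~\ref{t:diff} with its proof left implicit, and the intended argument is precisely your reduction — take $X_n=Y$, use injectivity of the encoder plus $|Y|=2^d$ to force the decoder to be a two-layer BTN carrying $\{0,1\}^d$ onto $Y$, contradicting the second assertion of that theorem. Your added remark about why the argument needs $n$ to be a power of two is a correct and worthwhile clarification of a detail the paper glosses over.
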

\section{Perfect Encoding and Decoding}
In this section we construct four autoencoders by adding decoders to the encoders constructed
in Theorems \ref{t:2-layer encoder}, \ref{t:3-layer sqrt encoder}, \ref{t:3-layer encoder},
and \ref{t:4-layer sqrt encoder}.
The autoencoder based on
Theorem \ref{t:2-layer encoder} is described in the next Theorem.
\smallskip
\begin{theorem}\label{t:3-layer auto}
	Given $X_n$ satisfying equation (\ref{e1}),
	there is a three-layer perfect BTN autoencoder with $n$ nodes in the middle layer.
\end{theorem}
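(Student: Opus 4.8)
The plan is to keep the encoder of Theorem~\ref{t:2-layer encoder} unchanged and to build the decoder directly from Lemma~\ref{l:gmap}. Theorem~\ref{t:2-layer encoder} provides a two-layer BTN whose $n$-node output layer, fed with $\xvec^i$, holds the step-vector $\textbf{h}^i[n]$; declare this layer to be the middle layer, so that $\fvec(\xvec^i)=\textbf{h}^i[n]$ and the middle layer has exactly the promised $n$ nodes.

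For the decoder I would apply Lemma~\ref{l:gmap} with $d:=D$, $s:=n$, and $Z:=X_n$, i.e.\ $\zvec^i:=\xvec^i$. This application is legitimate because $X_n$ consists of $n$ distinct $D$-dimensional binary vectors, whence $n\leq 2^D$ and the Lemma's hypothesis $s\leq 2^d$ holds. The Lemma then supplies a single layer of $D$ threshold functions, with integer (in fact $\{-1,0,1\}$-valued) weights and threshold $1$, that maps $\textbf{h}^i[n]$ to $\xvec^i$ for every $i$; take $\gvec$ to be this mapping.

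Finally, glue the two networks along their common step-vector layer. The result is a single BTN with three layers and architecture $D/n/D$, its encoder and decoder being the $\fvec$ and $\gvec$ just described; and $\gvec(\fvec(\xvec^i))=\gvec(\textbf{h}^i[n])=\xvec^i$ for all $i$, so it is a perfect autoencoder with $n$ nodes in the middle layer.

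I do not anticipate a genuine obstacle here: the substantive work is already contained in Theorem~\ref{t:2-layer encoder} and Lemma~\ref{l:gmap}, and the proof is essentially their composition. The only points deserving a moment's care are bookkeeping ones — that identifying the shared layer yields exactly three layers (not four), that the designated middle layer is the one with $n$ nodes, and that integer weights with an integer threshold are used throughout so that the composite network is a bona fide BTN.
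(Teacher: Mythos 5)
Your proposal is correct and matches the paper's own proof: the paper likewise takes the two-layer encoder of Theorem~\ref{t:2-layer encoder} and appends the decoder obtained from Lemma~\ref{l:gmap} with $\zvec^i=\xvec^i$ and $s=n$. Your additional checks (that $n\leq 2^D$ so the lemma applies, and that the shared step-vector layer makes the network three layers) are correct bookkeeping that the paper leaves implicit.
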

\begin{proof}
To the two-layer encoder of Theorem \ref{t:2-layer encoder} add the decoder
that maps  $\textbf{h}^i[n]$ to $\textbf{x}^i,\ i=0,\ldots, n-1$. That decoder can be
obtained from the construction of Lemma	\ref{l:gmap} by setting $\zvec^i=\textbf{x}^i$
and $s=n$.
\end{proof}
\smallskip
To obtain an autoencoder based on Theorem \ref{t:3-layer sqrt encoder} will take more doing.
\smallskip
\begin{theorem}\label{t:5-layer sqrt auto}
		Let $r=\lceil \sqrt{n} \rceil$. Given $X$ satisfying equation (\ref{e1}),
	there is a five-layer BTN perfect autoencoder with $r+D$  nodes  in the second layer, $2r$ nodes in the middle hidden layer, and $rD$ nodes in the
	fourth layer.
%	The autoencoder has a total of $(D+1)r^2+(5D+1)r+D$ edges.
\end{theorem}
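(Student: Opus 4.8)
The plan is to keep the three-layer encoder of Theorem~\ref{t:3-layer sqrt encoder} intact as the first three layers and to append a two-layer decoder. Recall that that encoder has $D$ input nodes, a second layer of $r+D$ nodes, and a middle (third) layer of $2r$ nodes $\beta^1_i,\beta^2_i$, $i=0,\ldots,r-1$; on input $\xvec^i$ with $i=kr+\ell$ and $0\le\ell<r$ it places $\boldsymbol{h}^k[r]$ in $\boldsymbol{\beta}^1$ and $\boldsymbol{h}^{\ell}[r]$ in $\boldsymbol{\beta}^2$. It remains to build a fourth layer of $rD$ nodes and an output layer of $D$ nodes that reconstruct $\xvec^i$ from $(\boldsymbol{\beta}^1,\boldsymbol{\beta}^2)$. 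As in Theorem~\ref{t:3-layer sqrt encoder} I would assume $n=r^2$ for ease of exposition, the general case needing only padding.

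View $X_n$ as an $r\times r$ array with $\xvec^{kr+\ell}$ in row $k$ and column $\ell$, and split the fourth layer into $r$ blocks $B_0,\ldots,B_{r-1}$ of $D$ nodes each. The goal is that block $B_k$ outputs $\xvec^{kr+\ell}$ when $k$ is the true row index and the zero vector otherwise; the output layer then simply ORs the blocks bitwise. Two devices go into the activation function of the $j$th node of $B_k$. First, a row detector: since $\boldsymbol{\beta}^1=\boldsymbol{h}^k[r]$ is a step vector, the quantity $\beta^1_k-\beta^1_{k+1}$ (with the convention $\beta^1_r=0$, which absorbs the case $k=r-1$) equals $1$ exactly when the true row is $k$ and $0$ otherwise. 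Second, a column selector obtained from the telescoping-weights construction of Lemma~\ref{l:gmap}: setting $v^{k,j}_0=x^{kr}_j$ and $v^{k,j}_m=x^{kr+m}_j-x^{kr+m-1}_j$ for $1\le m\le r-1$, the telescoping identity gives $\vvec^{k,j}\cdot\boldsymbol{h}^{\ell}[r]=\sum_{m=0}^{\ell}v^{k,j}_m=x^{kr+\ell}_j$, a value that is always $0$ or $1$. Taking a large coefficient $T$ (e.g. $T=2$), I would define the $j$th node of $B_k$ by $[\,T(\beta^1_k-\beta^1_{k+1})+\vvec^{k,j}\cdot\boldsymbol{\beta}^2\ge T+1\,]$: when the row is $k$ this is $[\vvec^{k,j}\cdot\boldsymbol{\beta}^2\ge1]=x^{kr+\ell}_j$, and when the row is not $k$ the first term vanishes and the second is at most $1<T+1$, so the node outputs $0$.

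The output layer has one threshold node $y_j=[\sum_{k=0}^{r-1}(B_k)_j\ge1]$ per bit. On input $\xvec^i$ with $i=kr+\ell$ exactly one block is active, namely $B_k$, and its $j$th node equals $x^i_j$, so $\yvec=\xvec^i$ for every $\xvec^i\in X_n$; the layer sizes are $D,\ r+D,\ 2r,\ rD,\ D$, as claimed, and the middle layer has $2r=2\lceil\sqrt n\rceil$ nodes. The one genuinely delicate point is that the conjunction of the row-detector condition with the bit-value condition be realizable as a single threshold function; this is precisely what the large coefficient $T$ buys, and it is clean here only because the telescoping sum $\vvec^{k,j}\cdot\boldsymbol{\beta}^2$ is already $\{0,1\}$-valued rather than merely bounded. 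Everything else---boundary handling at $k=r-1$ and, in the general case, rows containing fewer than $r$ vectors---is routine bookkeeping.
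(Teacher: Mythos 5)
Your proposal is correct and is essentially the paper's own construction: the paper likewise keeps the three-layer encoder, adds $rD$ hidden nodes $\eta_{i,j}$ each computing a telescoping sum (as in Lemma~\ref{l:gmap}) against one step-vector plus an adjacent-difference detector $\beta_i-\beta_{i+1}$ on the other, thresholded at $2$, and ORs the results at the output. The only difference is cosmetic: you index the blocks by the row $k$ and telescope over $\boldsymbol{\beta}^2$, whereas the paper indexes by the column $\ell$ and telescopes over $\boldsymbol{\beta}^1$ (and your coefficient $T=2$ is unnecessary --- $T=1$ already works, since the detector and the telescoped bit are each $\{0,1\}$-valued).
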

\begin{proof}
On top of the encoder constructed in the proof of Theorem \ref{t:3-layer sqrt encoder}
we add
a decoder that is a 3-layer network with the $2r$ input nodes
$\beta^1_i, \beta^2_i,\ i=0,\ldots, r-1$,
$rD$ hidden nodes $\eta_{i,j},\ i=0,\ldots,r-1,\ j=0,\ldots, D-1$,
and $D$ output nodes $y_j,\ j=0,\ldots, D-1$,
see Fig.~\ref{fig:5-layer auto}.
To simplify the description we add a dummy node $\beta^2_r=0$.

\begin{figure}[th]
	\begin{center}
		\includegraphics[width=12cm]{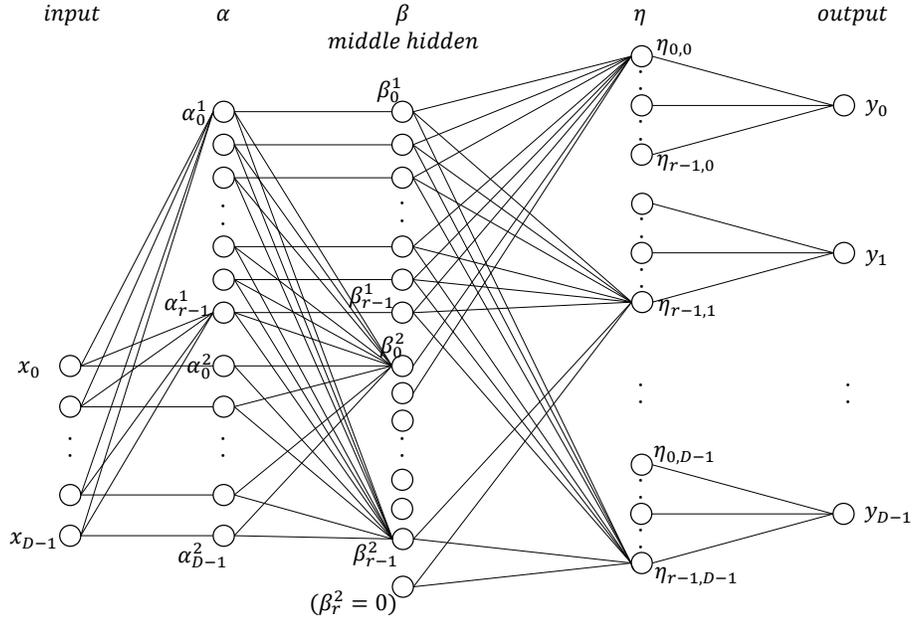}
		\caption{Perfect Encoder/Decoder BTN.}
		\label{fig:5-layer auto}
	\end{center}
\end{figure}

We will see that this decoder
outputs $\textbf{y}=\textbf{x}^{kr+\ell}$
when given input $(\boldsymbol{h}^k[r], \boldsymbol{h}^{\ell}[r])$,
as desired.
Equip the node $\eta_{i,j}$ with the activation function
\begin{align}\label{egamma}
[x^i_j\beta^1_0&+(x^{r+i}_j-x^i_j)\beta^1_1+\ldots \nonumber\\ \ldots &+(x^{r(r-1)+i}_j-x^{r(r-2)+i}_j)\beta^1_{r-1}+\beta^2_i-\beta^2_{i+1}\geq 2].
\end{align}
Note that if $\boldsymbol{\beta}^1=\boldsymbol{h}^k[r]$ then
\begin{align*}
x^i_j\beta^1_0&+(x^{r+i}_j-x^i_j)\beta^1_1+\ldots\\ \ldots &+(x^{r(r-1)+i}_j-x^{r(r-2)+i}_j)\beta^1_{r-1}=x^{kr+i}_j.
\end{align*}
Note further that if $\boldsymbol{\beta}^2=\boldsymbol{h}^{\ell}[r]$
 then $\beta^2_i-\beta^2_{i+1}=0$ unless $ i=\ell$.
Hence, when $(\boldsymbol{\beta}^1,\boldsymbol{\beta}^2)=
(\boldsymbol{h}^k[r],\boldsymbol{h}^{\ell}[r])$, the value
of $\eta_{i,j}$ is $x^{kr+\ell}_j$ if $i=\ell$ and 0 otherwise,
according to the activation function (\ref{egamma}).

To complete the definition of the network we equip node $y_j$ with the activation function
\begin{equation*}\label{ey}
[\sum^{r-1}_{i=0} \eta_{i,j}\geq 1].
\end{equation*}
It follows that when the network is given the input $\boldsymbol{\beta}^1$, $\boldsymbol{\beta}^2$
as specified in the
Theorem,
$y_j$ has value 1 if and only if
$x^{kr+\ell}_j=1$, i.e. $y_j=x^{kr+\ell}_j$.
\end{proof}
\smallskip

We will now show how to replace the middle layer of each of the autoencoders of
Theorems \ref{t:3-layer auto} and \ref{t:5-layer sqrt auto} with a three-layer
autoencoder whose middle layer has
the minimum possible dimension, $\lceil \log n \rceil$.
Observe that the middle layer of the autoencoder constructed in the proof of
Theorem \ref{t:3-layer auto} has $n$ nodes and that the only values that this
layer assumes are  $\textbf{h}^i[n], i=0,\ldots, n-1$.
The middle layer of the autoencoder constructed in the proof of
Theorem \ref{t:5-layer sqrt auto} has two sets of  $r=\lceil \sqrt{n} \rceil$ nodes and the only values that each of these sets can assume are  $\textbf{h}^i[r], i=0,\ldots, r-1$.
In each case, therefore, the appropriate three-layer autoencoder to replace the
middle layer can be constructed from one or two autoencoders on $s$ nodes which
only assume the values $\textbf{h}^i[s], i=0,\ldots, s-1$.
\smallskip
\begin{lemma}\label{l:autoencoder}
	There is a three-layer BTN that autoencodes the set of vectors $\textbf{h}^i[s], i=0,\ldots, s-1$ and has a middle layer of size $\lceil \log{s} \rceil$.
\end{lemma}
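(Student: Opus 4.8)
The plan is to build the three-layer autoencoder for the step-vectors $\textbf{h}^i[s]$ by composing the two encoder/decoder building blocks already developed in the excerpt. Recall that the middle layer must have size $d = \lceil \log s \rceil$. First I would note that the set $Z = \{\,\textbf{i2b}_d(i) : i = 0,\ldots,s-1\,\}$ of $d$-dimensional binary representations consists of $s \le 2^d$ distinct vectors, so Lemma~\ref{l:gmap} applies with this $Z$ and with $s$ as given: it yields a two-layer BTN mapping $\textbf{h}^i[s]$ to $\textbf{i2b}_d(i)$. This is the encoder half. It maps $s$ input nodes to $d = \lceil \log s \rceil$ middle nodes, which is exactly the compression we want, and by construction it is injective on the step-vectors.

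Next I would construct the decoder half, a two-layer BTN that maps $\textbf{i2b}_d(i)$ back to $\textbf{h}^i[s]$. The key observation is that $\textbf{h}^i[s]$ has a 1 in coordinate $j$ precisely when $j \le i$, i.e. when $i \ge j$. So I need, for each output coordinate $j \in \{0,\ldots,s-1\}$, a threshold function on the $d$-bit input $\zvec = \textbf{i2b}_d(i)$ that fires exactly when the integer $i$ encoded by $\zvec$ satisfies $i \ge j$. Writing $\zvec \cdot (2^{d-1}, 2^{d-2}, \ldots, 2^0) = i$ by the definition of binary representation, the function $[\,\zvec \cdot (2^{d-1},\ldots,2^0) \ge j\,]$ does exactly this. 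Thus the decoder output node for coordinate $j$ uses weight vector $(2^{d-1},\ldots,2^0)$ and threshold $j$; all $s$ output nodes share the same weights and differ only in threshold. This is a legitimate single-layer BTN from $d$ inputs to $s$ outputs, and composing it with the encoder gives a three-layer BTN whose middle layer has size $\lceil \log s \rceil$ and which autoencodes $\{\textbf{h}^i[s]\}$.

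The main thing to get right — though it is more bookkeeping than genuine obstacle — is the consistency of the two halves at the middle layer: the encoder of Lemma~\ref{l:gmap} must output exactly the $\textbf{i2b}_d(i)$ that the decoder expects as input, so I would make sure to instantiate Lemma~\ref{l:gmap} with $\zvec^i = \textbf{i2b}_d(i)$ rather than some other indexing of the $d$-bit strings. A small corner case worth a sentence: when $s$ is not a power of $2$, only $s$ of the $2^d$ binary strings actually occur as middle-layer values, and the decoder's threshold functions are well-defined and correct on exactly those $s$ strings (their behavior on the unused strings is irrelevant), so no difficulty arises there. Once this is in place, the Lemma follows immediately, and the replacement of the $n$-node (resp. two $r$-node) middle layers in Theorems~\ref{t:3-layer auto} and~\ref{t:5-layer sqrt auto} by this construction is then just a matter of splicing, as the surrounding text indicates.
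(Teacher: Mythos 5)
Your proposal is correct and follows essentially the same route as the paper: the encoder is Lemma~\ref{l:gmap} instantiated with $\zvec^i$ equal to the binary representation of $i$, and the decoder node for coordinate $j$ is the single threshold function that reconstructs the integer $i$ from its bits and tests $i \ge j$ (the paper writes this as $[\sum_{h} \gamma_h 2^h \ge j]$, differing from yours only in bit-ordering convention). No gaps.
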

\begin{proof}
	For ease of exposition we assume that
	$ s={2^m}$.
We construct a BTN with $s$ input nodes $\boldsymbol{\beta}$,
$m$ hidden nodes $\boldsymbol{\gamma}$, and
$s$ output nodes $\boldsymbol{\delta}$.

On input $\textbf{h}^i[s]$ the middle layer
$\boldsymbol{\gamma}$ contains the binary representation
of $i$.
The decoding layer is therefore straightforward:
the activation function for $\delta_j, j=0,\ldots, s-1$ is
$[\sum_{h=0}^{m-1} \gamma_h 2^h\geq j]$.

The activation functions for $\gamma_j, j=0,\ldots ,m-1$
can be computed by the method of Lemma \ref{l:gmap}:
\begin{align*}
\gamma_{m-1}:&\ [\beta_{\frac{s}{2}}\geq 1];\\
\gamma_{m-2}:&\ [\beta_{\frac{s}{4}}- \beta_{\frac{2s}{4}}+\beta_{\frac{3s}{4}}\geq 1];\\
\gamma_{m-3}:&\ [\beta_{\frac{s}{8}}-\beta_{\frac{2s}{8}}+
\beta_{\frac{3s}{8}}-\beta_{\frac{4s}{8}-1}+\ldots +
\beta_{\frac{7s}{8}}\geq 1];\\
& .\\
& . \\
\gamma_{0}:& [\beta_1-\beta_2+\beta_3-\beta_4+\ldots + \beta_{s-1}\geq 1].
\end{align*}
%It is easily seen that on input $\textbf{h}^i[s]$ the middle layer nodes satisfy:
%\begin{itemize}
%	\item $\gamma_{m-1}=1$ if and only if $\frac{s}{2}\leq  i $;
%	\item $\gamma_{m-2}=1$ if and only if  $\frac{s}{4}\leq  i < \frac{2s}{4}$
%	or $\frac{3s}{4}\leq  i $;
%	\item $\gamma_{m-3}=1$ if and only if  $\frac{s}{8}\leq i <\frac{2s}{8}$
%	or $\frac{3s}{8}\leq i < \frac{4s}{8}$ or $\frac{5s}{8}\leq i < \frac{6s}{8}$ or $\frac{7s}{8}\leq i $; and so on until
%	\item $\gamma_{0}=1$ if and only if $1\leq i < 2$ or $3\leq i < 4$ or
%	$5\leq i < 6$ or $\cdots$ or  $s-1\leq i$.
%\end{itemize}
Note that $\beta_0=\delta_0=1$ for all inputs $\textbf{h}^i[s]$, since by definition $\textbf{h}^i[s]_0=1$, and the activation function of $\delta_0$ is $[\sum_{h=0}^{m-1} \gamma_h 2^h\geq 0]$.
\end{proof}
\smallskip

Replacing the middle layers of the autoencoders constructed in
Theorems \ref{t:3-layer auto} and  \ref{t:5-layer sqrt auto} with the
appropriate three-layer BTN constructed according to the Lemma
yields the following two results.
\smallskip
\begin{theorem}\label{t:5-layer auto}
	Given $X_n$ satisfying equation (\ref{e1}),
	there is a five-layer perfect BTN autoencoder with $n$ nodes
	in the second and fourth layer and
	$\lceil\log n \rceil$ nodes in the middle layer.
\end{theorem}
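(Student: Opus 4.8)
The plan is to obtain the five-layer autoencoder by ``opening up'' the middle layer of the three-layer autoencoder of Theorem~\ref{t:3-layer auto} and inserting in its place the three-layer autoencoder of Lemma~\ref{l:autoencoder}, instantiated with $s=n$. Recall that in Theorem~\ref{t:3-layer auto} the encoder (Theorem~\ref{t:2-layer encoder}) sends $\xvec^i$ to the step-vector $\textbf{h}^i[n]$, and the decoder (the construction of Lemma~\ref{l:gmap} with $\zvec^i=\xvec^i$ and $s=n$) sends $\textbf{h}^i[n]$ back to $\xvec^i$. The only values this middle layer ever assumes are $\textbf{h}^0[n],\ldots,\textbf{h}^{n-1}[n]$, and these are exactly the vectors that Lemma~\ref{l:autoencoder} autoencodes through a hidden layer of $\lceil\log n\rceil$ nodes.

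Concretely I would lay out the five layers as follows (assuming, as in Lemma~\ref{l:autoencoder}, that $n$ is a power of two, the general case being handled by the usual padding). Layer~1 is the input, $D$ nodes holding $\xvec$; layer~2 has $n$ nodes fed from layer~1 by the threshold functions $G_j(\xvec)=[\avec\cdot\xvec\ge b_j]$, so on input $\xvec^i$ it holds $\textbf{h}^i[n]$ and plays the role of the input layer $\boldsymbol{\beta}$ of Lemma~\ref{l:autoencoder}; layer~3, the middle layer, has $\lceil\log n\rceil$ nodes fed by the activation functions $\boldsymbol{\gamma}$ of Lemma~\ref{l:autoencoder}, so it holds the binary representation of $i$; layer~4 has $n$ nodes fed by the activation functions $\delta_j=[\sum_{h}\gamma_h 2^h\ge j]$ of Lemma~\ref{l:autoencoder}, so it again holds $\textbf{h}^i[n]$ and plays the role of the input layer of the Lemma~\ref{l:gmap} decoder; and layer~5 is the output, $D$ nodes fed by the threshold functions $H_j$ of Lemma~\ref{l:gmap} with $\zvec^i=\xvec^i$ and $s=n$.

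It then remains only to chain the correctness statements already established: $\xvec^i\mapsto\textbf{h}^i[n]$ by Theorem~\ref{t:2-layer encoder}, $\textbf{h}^i[n]\mapsto(\text{binary representation of }i)\mapsto\textbf{h}^i[n]$ by Lemma~\ref{l:autoencoder}, and $\textbf{h}^i[n]\mapsto\xvec^i$ by Lemma~\ref{l:gmap}. Composing these four maps gives $\gvec(\fvec(\xvec^i))=\xvec^i$ for every $i$, with the middle layer of size $\lceil\log n\rceil$ and the second and fourth layers of size $n$, as claimed. There is no real obstacle beyond bookkeeping; the one point to check carefully is that the vector delivered by each sub-construction is precisely the step-vector (respectively binary vector) that the next one expects as input, which is guaranteed because Theorem~\ref{t:2-layer encoder}, Lemma~\ref{l:autoencoder}, and Lemma~\ref{l:gmap} were each stated with exactly these vectors as their inputs and outputs. (The identical splicing, applied to Theorem~\ref{t:5-layer sqrt auto} using one copy of the Lemma~\ref{l:autoencoder} autoencoder for each of the two blocks of $\lceil\sqrt n\rceil$ middle nodes, would yield the analogous $7$-layer result with a middle layer of $2\lceil\log\sqrt n\rceil$ nodes.)
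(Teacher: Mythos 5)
Your proposal is correct and follows exactly the paper's route: the paper likewise obtains this theorem by replacing the $n$-node middle layer of the Theorem~\ref{t:3-layer auto} autoencoder (whose only attainable values are the step-vectors $\textbf{h}^i[n]$) with the three-layer autoencoder of Lemma~\ref{l:autoencoder} instantiated at $s=n$. Your explicit layer-by-layer bookkeeping, and the remark about the analogous splicing into Theorem~\ref{t:5-layer sqrt auto}, are just a more detailed writing-out of the same argument.
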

\smallskip
\begin{theorem}\label{t7-layer}
	Given
	%	a set of $n$ $D$-dimensional different vectors
	$X_n$ satisfying	equation (\ref{e1}),
	there is a seven-layer perfect BTN autoencoder with
	$2\lceil\log \sqrt{n} \rceil$
	nodes in the middle hidden layer,
	and $(D+5)\lceil \sqrt{n}\rceil +D$
	nodes in the other hidden layers.
\end{theorem}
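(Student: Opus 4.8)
The plan is to build the seven-layer autoencoder by concatenating the four-layer encoder of Theorem~\ref{t:3-layer sqrt encoder} (whose output layer has $2r$ nodes holding $(\boldsymbol{h}^k[r],\boldsymbol{h}^{\ell}[r])$ with $i=kr+\ell$) with a three-layer decoder, and then \emph{compressing the interface} down to $2\lceil\log r\rceil=2\lceil\log\sqrt{n}\rceil$ nodes using Lemma~\ref{l:autoencoder}. Concretely: layers 1--4 are exactly the encoder of Theorem~\ref{t:3-layer sqrt encoder}. Its fourth layer $\boldsymbol{\beta}=(\boldsymbol{\beta}^1,\boldsymbol{\beta}^2)$ carries step-vectors of length $r$ each. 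By Theorem~\ref{t:5-layer sqrt auto} there is a three-layer BTN decoder (hidden layer of $rD$ nodes $\eta_{i,j}$, output layer of $D$ nodes) that recovers $\xvec^{kr+\ell}$ from $(\boldsymbol{h}^k[r],\boldsymbol{h}^{\ell}[r])$. Stacking these would give a six-layer autoencoder with middle layer of size $2r$; the point is to insert a genuine bottleneck.

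The key move is to apply Lemma~\ref{l:autoencoder} twice in parallel, once to each of $\boldsymbol{\beta}^1$ and $\boldsymbol{\beta}^2$, with $s=r=2^m$. For the first copy this replaces a direct edge-set by: $r$ input nodes $\boldsymbol{\beta}^1$, a middle layer $\boldsymbol{\gamma}^1$ of $m=\log r$ nodes holding the binary representation of $k$, and $r$ output nodes $\boldsymbol{\delta}^1$ reconstructing $\boldsymbol{h}^k[r]$; similarly $\boldsymbol{\gamma}^2$ of size $m$ for $\ell$ and $\boldsymbol{\delta}^2$ reconstructing $\boldsymbol{h}^\ell[r]$. Running the two copies side by side gives a three-layer sub-network mapping $(\boldsymbol{h}^k[r],\boldsymbol{h}^\ell[r])$ to itself through a middle layer $(\boldsymbol{\gamma}^1,\boldsymbol{\gamma}^2)$ of total size $2m=2\lceil\log\sqrt n\rceil$. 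Splicing this sub-network in between the encoder's output layer $\boldsymbol{\beta}$ (which becomes the second layer of the sub-network, i.e.\ the encoder's layer 4 = sub-network layer 1 are identified) and the decoder's input, the layer count is: 4 (encoder) $+\,3$ (Lemma sub-network) $-\,1$ (shared $\boldsymbol{\beta}$ layer) $+\,3$ (decoder) $-\,1$ (shared $\boldsymbol{\delta}$ layer) $=7$, with the $(\boldsymbol{\gamma}^1,\boldsymbol{\gamma}^2)$ layer as the specified middle layer. I would also verify the node count: the encoder contributes $r+D$ (layer 2), $2r$ (layer 4, = $\boldsymbol{\beta}$), plus $D$ copy-nodes in layer 3 of Theorem~\ref{t:3-layer sqrt encoder}; the two Lemma copies contribute $2m$ in the middle and $2r$ in $\boldsymbol{\delta}$; the decoder contributes $rD$ for the $\eta_{i,j}$; summing the hidden layers gives $(r+D)+D+2r+2r+rD \le (D+5)\lceil\sqrt n\rceil+D$ (absorbing the lower-order $D$-terms and $2m$ into the stated bound since $2m\le 2r$ and $D\le$ what is already counted), matching the claimed $(D+5)\lceil\sqrt n\rceil+D$.

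The one point requiring care is the \emph{seam}: Lemma~\ref{l:autoencoder}'s output layer $\boldsymbol{\delta}$ must feed the decoder of Theorem~\ref{t:5-layer sqrt auto}, whose first layer is $\boldsymbol{\beta}=(\boldsymbol{\beta}^1,\boldsymbol{\beta}^2)$. Since $\boldsymbol{\delta}^1=\boldsymbol{h}^k[r]$ and $\boldsymbol{\delta}^2=\boldsymbol{h}^\ell[r]$ exactly reproduce the step-vectors the decoder expects, the identification $\boldsymbol{\beta}^1\!\leftarrow\!\boldsymbol{\delta}^1$, $\boldsymbol{\beta}^2\!\leftarrow\!\boldsymbol{\delta}^2$ is legitimate and the composed network outputs $\xvec^{kr+\ell}=\xvec^i$. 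One must also recall the dummy node $\beta^2_r=0$ used in Theorem~\ref{t:5-layer sqrt auto}; it is supplied as a constant and costs nothing. I would present the construction as: state the layer-by-layer description, cite Theorems~\ref{t:3-layer sqrt encoder} and~\ref{t:5-layer sqrt auto} and Lemma~\ref{l:autoencoder} for correctness of each block, then check layer count and the node tally.

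The main obstacle I anticipate is purely bookkeeping rather than conceptual: getting the seven-layer count exactly right (the two ``$-1$'' shared-layer identifications are easy to miscount) and confirming the hidden-node total fits under $(D+5)\lceil\sqrt n\rceil+D$ once every auxiliary and copy node is included — in particular making sure the $2r$ nodes of $\boldsymbol{\delta}$, the $2r$ of $\boldsymbol{\beta}$, the $rD$ of the $\eta_{i,j}$, the $r+2D$ from the encoder, and the $2m$ middle nodes are all simultaneously accommodated, using $2m=2\log\sqrt n\le\lceil\sqrt n\rceil$ and the fact that the middle layer is counted separately from ``the other hidden layers.'' No new threshold-function gadgetry is needed; everything is assembled from the already-proven pieces.
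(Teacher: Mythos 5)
Your construction is exactly the paper's: splice the two-parallel-copies version of Lemma~\ref{l:autoencoder} into the middle layer of the five-layer autoencoder of Theorem~\ref{t:5-layer sqrt auto}, so the approach is correct and essentially identical. The only flaw is a consistent bookkeeping slip: the encoder of Theorem~\ref{t:3-layer sqrt encoder} has three layers, not four (there is no separate layer of $D$ copy-nodes --- the $D$ copies sit inside the $r+D$ hidden layer), and once that phantom layer is removed your layer arithmetic becomes $3+3-1+3-1=7$ and your hidden-node tally becomes $(r+D)+2r+2r+rD=(D+5)\lceil\sqrt{n}\rceil+D$ exactly, rather than the $8$ and $(D+5)\lceil\sqrt{n}\rceil+2D$ your sums as written actually yield.
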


\section{Conclusion}

In this paper, we have studied the compressive power of autoencoders
mainly
within the Boolean threshold network model by exploring the
existence of encoders and of autoencoders that
map distinct input vectors to distinct vectors in lower-dimensions.
It should be noted that our results are not necessarily optimal except for
Proposition \ref{prop:bool}.
The establishment of lower bounds and the reduction of the number of layers or nodes
are left as open problems.

Although we focused on the existence of injection mappings,
conservation of the distance is another important factor
in dimensionality reduction.
Therefore, it should be interesting and useful to study
autoencoders that approximately conserve the distances
(e.g., Hamming distance) between input vectors.

Another important direction is to use more practical models of
neural networks such as those with
sigmoid functions and Rectified Linear Unit (ReLU) functions.
In such cases, real vectors need to be handled and thus
conservation of the distances between vectors should also be analyzed.

\end{document}